\documentclass[draftcls,onecolumn,11pt]{IEEEtran}

\usepackage{amsmath,amstext,amsthm,amssymb,epsf}
\usepackage{fullpage,latexsym}
\usepackage[pdftex]{graphicx}
\usepackage{setspace}
%\doublespacing
%\renewcommand{\baselinestretch}{2}
%\usepackage{pslatex}
%
%\usepackage{amsmath,epsf}
%\renewcommand{\baselinestretch}{1.05}
\sloppy

\numberwithin{equation}{section} % in amsmath

 \newtheorem{lemma}{Lemma}[section]
 \newtheorem{theorem}[lemma]{Theorem}

 \newtheorem{claim}[lemma]{Claim}

 \newtheorem{definition}[lemma]{Definition}
 \newtheorem{rem}[lemma]{Remark}
\newenvironment{remark}{\begin{rem}}{\hspace*{\fill}$\diamondsuit$\end{rem}}
 \newtheorem{ex}[lemma]{Example}

\renewcommand{\emptyset}{\varnothing}

\begin{document}

\title{Normalized Compression Distance of Multisets with Applications}
%\title{Classification by Compression}
%\title{Normalized Compression Distance for Approximating the Kolmogorov Complexity of a Multiset?}
\author{Andrew R. Cohen\thanks
{Andrew Cohen is with the Department of Electrical and Computer Engineering, 
Drexel University.
Address: A.R. Cohen, 3120--40 Market Street, 
Suite 313, Philadelphia, PA 19104,
USA. Email: {\tt acohen@coe.drexel.edu}} 
and Paul M.B. Vit\'{a}nyi
\thanks{
Paul Vit\'{a}nyi is with the national research center for mathematics 
and computer science in the Netherlands (CWI),
and the University of Amsterdam.
Address:
CWI, Science Park 123,
1098XG Amsterdam, The Netherlands.
Email: {\tt Paul.Vitanyi@cwi.nl}
}}

%\markboth{IEEE Transactions on Information Theory, VOL. XX, NO Y, MONTH 2010}{Paul Vit\'{a}nyi: Information Distance in Multisets}

\maketitle

\begin{abstract}
Normalized compression distance (NCD) is a parameter-free, feature-free,
alignment-free, similarity measure between a pair of finite objects
based on compression. However, it
is not sufficient for all applications.
We propose an NCD of 
finite nonempty multisets (a.k.a. multiples) of 
finite objects that is also a metric. 
Previously, attempts to obtain such an NCD failed.
We cover the entire trajectory from theoretical underpinning to
feasible practice.
The new NCD for multisets is applied to retinal progenitor cell 
classification questions and to related synthetically generated data that were
earlier treated with the pairwise NCD. With the new
method we achieved significantly better results. Similarly for
questions about axonal organelle transport.
We also applied the new NCD to handwritten digit recognition and 
improved classification accuracy significantly over that of pairwise NCD by incorporating 
both the pairwise and NCD for multisets.
In the analysis we use the incomputable
Kolmogorov complexity that
for practical purposes is 
approximated from above by the length
of the compressed version of the file involved, using
a real-world compression program.

{\em Index Terms}---
Normalized compression distance, multisets or multiples, pattern recognition, 
data mining, similarity, classification, Kolmogorov complexity, 
retinal progenitor cells, synthetic data, organelle transport, handwritten character recognition
\end{abstract}

\section{Introduction}
\label{sect.intro}

To define the information in a {\em single} finite object one commonly uses
the Kolmogorov complexity \cite{Ko65} of that object (finiteness is taken
as understood in the sequel).
Information distance \cite{BGLVZ98} is the information 
required to transform one in the other, or vice versa, among a 
{\em pair} of objects. For research in the theoretical direction
see among others \cite{Mu02}. Here we are more concerned with
normalizing it to obtain the so-called similarity metric 
and subsequently approximating the Kolmogorov complexity through real-world
compressors \cite{Li03}. This leads to the normalized compression distance
(NCD) which is theoretically analyzed and applied to general hierarchical
clustering in \cite{CV04}. The NCD is
parameter-free, feature-free, and alignment-free,
and has found many applications in pattern recognition, phylogeny, clustering,
and classification, for example 
\cite{AS05,KJ04,KKKP06,NPAetal08,NPLetal08,Co09,Co10,WLB08} and the many
references in Google Scholar to \cite{Li03,CV04}.
Another application is to 
objects that are only represented by name, 
or objects that are abstract like `red,' `Einstein,' 
`three.' In this case the distance uses background information
provided by Google or any search engine that produces aggregate page 
counts. It discovers the `meaning' of words and phrases in the sense
of producing a relative semantics \cite{CV07}.  
The question arises of the
shared information between many objects
instead of just a pair of objects.

\subsection{Related Work}\label{sect.relwork}

In \cite{Li08} the notion is introduced of the information required to
go from any object in a multiset of objects to any other object in the
multiset. This
is applied to extracting the essence from, for example, a finite 
nonempty multiset
of internet news items,  
reviews of electronic cameras, tv's, and so on, in a way
that works better than other methods. 
Let $X$ denote a finite nonempty multiset of 
$m$ finite binary strings defined by (abusing the set notation)
$X=\{x_1, \ldots, x_m\}$, the constituting
elements (not necessarily all different) ordered length-increasing lexicographic.
We use multisets and not sets, since if $X$ is a set then all of its
members are different while we are interested in the situation were
some or all of the objects are equal. 
Let $U$ be the reference universal Turing machine, for convenience the prefix
one as in Section~\ref{sect.prel}.
We define the {\em information distance} in $X$
by $E_{\max} (X) = \min \{|p|: U(x_i,p,j)=x_j$ for all $x_i,x_j \in X$\}.
It is shown in \cite{Li08}, Theorem 2, that 
\begin{equation}\label{eq.li08}
E_{\max}(X)=
\max_{x:x \in X} K(X|x),
\end{equation}
up to an additive term $O(\log K(X))$.
Define $E_{\min}(X) = \min_{x:x \in X} K(X|x)$.
Theorem 3 in \cite{Li08} states that 
\begin{equation}\label{eq.li083}
E_{\min}(X) \leq E_{\max}(X) \leq
\min_{i: 1 \leq i \leq m} 
\sum_{x_i,x_k \in X \; \& \; k \neq i} E_{\max} (x_i,x_k),  
\end{equation}
up to a logarithmic additive term. 
The information distance in \cite{BGLVZ98}
between strings $x_1$ and $x_2$ is denoted by $E_1(x_1,x_2) 
= \max\{K(x_1|x_2),K(x_2|x_1)\}$.
Here we use the notation $\max_{x:x \in X} K(X|x)$.
The two coincide for $|X|=2$ since $K(x,y|x)=K(y|x)$ up
to an additive constant term.
In \cite{Vi11} this notation was introduced and
the following results were obtained for finite nonempty multisets.
The maximal overlap of information, concerning the remarkable property
that the information needed to go from
any member $x_j$ to any other member $x_k$ in a multiset $X$ 
can be divided in two parts: a single string
of length $\min_i K(X|x_i)$ and a special string of length $\max_i (K(X|x_i)
-\min_i K(X|x_i)$ possibly depending on $j$ and some 
logarithmic additive terms possibly depending on $j,k$.
Furthermore, 
the minimal overlap property,
the metricity property, the 
universality property, and 
the not-subadditivity property.
With respect to normalization of the information distance of
multisets only abortive attempts were given.
A review of some of the above is \cite{Li11}.

\subsection{Results}
For many applications we require a normalized and computable version
of the information distance for finite nonempty multisets of finite objects.
For instance,
classifying an object into one or another of disjoint classes we aim
for the class of which the NCD for multisets grows the least.
We give preliminaries in Section~\ref{sect.prel}.
The normalization of the information distance for multisets which
did not succeed in \cite{Vi11} is analyzed and performed in 
Section~\ref{sect.nid}. In particular it is proved to be a metric.
We sometimes require metricity since otherwise the results may be inconsistent
and absurd.
Subsequently we proceed to the practically feasible compression distance
for multisets and prove this
to be a metric, Section~\ref{sect.cd}. Next, this compression distance
is normalized and proved to retain the metricity, Section~\ref{sect.ncd}.
We go into the question of how to compute this in Section~\ref{sect.comp},
how to apply this to classification in Section~\ref{sect.ancd}, 
and then treat the applications Section~\ref{sect.app}.
We applied the new NCD for multisets to retinal progenitor 
cell classification questions, Section~\ref{sect.rpc},
and to synthetically generated data, Section~\ref{sect.synth}, that were
earlier treated with the pairwise NCD. Here we get significantly better
results. This was also the case for
questions about axonal organelle transport, Section~\ref{sect.aot}.
We also applied the NCD for multisets to classification of 
handwritten digits, Section~\ref{sect.NIST}. 
Although the NCD for multisets did not improve on the accuracy 
of the pairwise NCD for this application, 
classification accuracy was much improved over either method individually by
combining the pairwise and multisets NCD. We treat the data, software, 
and machines used for the applications in Section~\ref{sect.means}. 
We finish with conclusions in 
Section~\ref{sect.concl}.

\section{Preliminaries}\label{sect.prel}
We write {\em string} to mean a finite binary string,
and $\epsilon$ denotes the empty string.
The {\em length} of a string $x$ (the number of bits in it)
is denoted by $|x|$. Thus,
$|\epsilon| = 0$.
We identify strings with natural numbers
by associating each string with its index
in the length-increasing lexicographic ordering according to the scheme
$
( \epsilon , 0),  (0,1),  (1,2), (00,3), (01,4), (10,5), (11,6), 
\ldots . 
$
In this way the Kolmogorov complexity in the next section can be about finite binary 
strings or natural numbers.
\subsection{Kolmogorov Complexity} 
The Kolmogorov complexity is the information in a single finite object
\cite{Ko65}.
Informally, the Kolmogorov complexity of a finite binary string
is the length of the shortest string from which the original
can be lossless reconstructed by an effective
general-purpose computer such as a particular universal Turing machine.
Hence it constitutes a lower bound on how far a
lossless compression program can compress.
For technical reasons we choose Turing machines with a separate 
read-only input tape that is scanned from left to right without backing up, 
a separate work tape on which the computation takes place, 
and a separate output tape. All tapes are divided into squares
and are semi-infinite. Initially,
the input tape contains a semi-infinite binary string with one bit per square
starting at the leftmost square, and all heads scan the leftmost square 
on their tapes. Upon halting, the initial segment $p$ 
of the input that has been scanned is called the input ``program'' 
and the contents of the output tape is called the ``output.'' 
By construction, the set of halting programs is prefix free. 
An standard enumeration of such Turing machines $T_1,T_2, \ldots$ 
contains a universal machine $U$ such that $U(i,p)=T_i(p)$ for all
indexes $i$ and programs $p$. (Such universal machines are called 
``optimal'' in contrast with universal machines like $U'$ with $U'(i,pp)=T_i(p)$
for all $i$ and $p$, and $U'(i,q)=1$ for $q \neq pp$ for some $p$.) 
We call $U$ the {\em reference universal prefix Turing machine}.
This leads to the definition of ``prefix Kolmogorov complexity''
which we shall designate simply as ``Kolmogorov complexity.''

Formally, the {\em conditional Kolmogorov complexity}
$K(x|y)$ is the length of the shortest input $z$
such that the reference universal prefix Turing machine $U$ on input $z$ with
auxiliary information $y$ outputs $x$. The
{\em unconditional Kolmogorov complexity} $K(x)$ is defined by
$K(x|\epsilon)$ where $\epsilon$ is the empty string.
In these definitions both $x$ and $y$ can consist of strings into which 
finite multisets of finite binary strings are encoded.  

Theory and applications are given in the textbook \cite{LV08}.
Here we give some relations that are needed in the paper.
The {\em information about $x$ contained in $y$} is defined as
$I(y:x)=K(x)-K(x | y)$. A deep, and very useful, result
holding for both plain complexity and
prefix complexity, due to L.A. Levin and A.N. Kolmogorov \cite{ZL70} called
{\em symmetry of information} states that
\begin{equation}\label{eq.soi}
K(x,y)=K(x)+K(y | x) = K(y)+K(x | y),
\end{equation}
with the equalities holding up to a $O(\log K)$  additive term.
Here, $K=\max\{K(x),K(y)\}$.
Hence, up to an additive logarithmic  term $I(x:y) = I(y:x)$ and we
call this the {\em mutual (algorithmic) information} between $x$ and $y$.

\subsection{Multiset} 
A multiset is also known as {\em bag}, {\em list}, or {\em multiple}. 
A {\em multiset} is a generalization of the notion of set. 
The members are allowed to appear more than once. 
For example, if $x\neq y$ then $\{x,y\}$ is a set, but $\{x,x,y\}$
and $\{x,x,x,y,y\}$ are multisets, with abuse of the set notation. 
We also abuse the 
set-membership notation by using it 
for multisets by writing $x \in \{x,x,y\}$ and $z \not\in \{x,x,y\}$
for $z \neq x,y$. Further, $\{x,x,y\} \setminus \{x\} = \{x,y\}$. If $X,Y,Z$
are multisets and $Z$ is nonempty and $X=YZ$, then
we write $Y \subset X$.  
For us, a multiset is finite and nonempty such as 
$\{x_1 , \ldots ,x_n\}$ with $0 < n < \infty$ and the members are 
finite binary strings in
length-increasing lexicographic order. If $X$ is a multiset, then some
or all of its elements may be equal. $x_i \in X$ means 
that ``$x_i$ is an element
of multiset $X$.'' 
With $\{x_1 ,\ldots , x_m\} \setminus \{x\}$ we mean the multiset
$\{x_1 \ldots x_m\}$ 
with one occurrence of $x$ removed.

The finite binary strings, finiteness,
and length-increasing lexicographic order allows us to assign
a unique Kolmogorov complexity to a multiset.
The conditional prefix Kolmogorov complexity $K(X|x)$ of a multiset
$X$ given an element $x$ is the length of a shortest program $p$
for the reference universal Turing machine that with input $x$
outputs the multiset $X$. The prefix Kolmogorov complexity $K(X)$ of a multiset
$X$ is defined by $K(X| \epsilon )$. 
One can also put multisets in the conditional such as $K(x|X)$ or 
$K(X|Y)$.
We will use the straightforward laws $K(\cdot|X,x)=K(\cdot|X)$
and $K(X|x)=K(X'|x)$ up to an additive constant term, for $x \in X$
and $X'$ equals the multiset $X$ with one occurrence of the element $x$ deleted.

\subsection{Information Distance}
The information distance in a multiset $X$ ($|X| \geq 2$) is given by
\eqref{eq.li08}.
To obtain the {\em pairwise information distance} in \cite{BGLVZ98} 
we take $X=\{x_1,x_2\}$ in \eqref{eq.li08}. The resulting formula is equivalent
to $E_{\max}(x_1,x_2) = \max\{K(x_1|x_2), K(x_2|x_1) \}$ up to a logarithmic
additive term.

\subsection{Metricity}\label{metric}
Let ${\cal X}$
be the set of length-increasing lexicographic ordered nonempty finite
multisets of finite binary strings. 
A {\em distance function} $d$
on ${\cal X}$ is defined by $d:{\cal X} \rightarrow {\cal R}^+$ where 
${\cal R}^+$ is the set of nonnegative real numbers. 
Define $Z=XY$ if
$Z$ is the multiset consisting of the elements of the multisets $X$ and $Y$ and
the elements of $Z$ are ordered length-increasing lexicographic.
A distance function
$d$ is a {\em metric} if 
\begin{enumerate}
\item
{\em Positive definiteness}: $d(X)=0$ if all elements of $X$ are equal
and $d(X) > 0$ otherwise.
\item
{\em Symmetry}: $d(X)$ is invariant
under all permutations of $X$.
\item
{\em Triangle inequality}: $d(XY) \leq d(XZ)+d(ZY)$.
\end{enumerate}
We recall Theorem 4.1 and Claim 4.2 from  \cite{Vi11}.
\begin{theorem}\label{theo.metric}
The information distance for multisets $E_{\max}$ is a metric 
where the (in)equalities hold up to a $O(\log K)$ additive term. 
Here $K$ is the largest quantity
involved in each metric (in)equality 1) to 3), respectively.
\end{theorem}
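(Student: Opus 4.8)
The plan is to verify the three defining properties from Section~\ref{metric} directly for the function $E_{\max}(X)=\max_{x:x\in X}K(X|x)$ of \eqref{eq.li08}, using throughout the elementary laws $K(\cdot|X,x)=K(\cdot|X)$ and $K(X|x)=K(X'|x)$ for $x\in X$ recorded in Section~\ref{sect.prel}, and absorbing encoding bookkeeping into the $O(\log K)$ additive tolerance. Symmetry (property~2) is immediate: $E_{\max}(X)$ is defined purely in terms of the multiset $X$ and the quantities $K(X|x)$ for its members, none of which refers to the order in which the elements are listed, so $E_{\max}$ is invariant under every permutation of $X$.

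For positive definiteness (property~1) I would read off the formula. If all elements of $X$ are equal, then the only datum needed to recover $X$ from any one of its members is the multiplicity, so $K(X|x)=O(\log|X|)$ for each $x\in X$ and hence $E_{\max}(X)=O(\log|X|)$, which is zero within the additive tolerance of the theorem. Conversely, if $X$ contains two distinct elements $x_i\neq x_j$, then producing $X$ from $x_i$ forces production of $x_j$, so that $K(X|x_i)\ge K(x_j|x_i)$ up to an $O(\log K)$ index term; since this lower bound does not vanish for genuinely distinct strings, $E_{\max}(X)>0$ to within the same precision.

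The substance of the theorem is the triangle inequality $E_{\max}(XY)\le E_{\max}(XZ)+E_{\max}(ZY)$, which I would establish by an explicit reconstruction. Choose a member $u$ of $XY$ attaining the maximum, so that $E_{\max}(XY)=K(XY|u)$; by the symmetry of the roles of $X$ and $Y$ assume $u\in X$, whence $u\in XZ$ and $K(XZ|u)\le E_{\max}(XZ)$. Fix a pivot $z\in Z$, possible because $Z$ is nonempty, so that $z\in ZY$ and $K(ZY|z)\le E_{\max}(ZY)$. I then exhibit a program that, on input $u$, first computes $XZ$ at cost $K(XZ|u)$, locates the pivot $z$ by its rank in the recovered length-increasing lexicographic listing of $XZ$ (an $O(\log|XZ|)$ index), uses $z$ to compute $ZY$ at cost $K(ZY|z)$, then deletes the occurrences forming $Z$ from $XZ$ to obtain $X$ and the same $Z$ from $ZY$ to obtain $Y$, and finally merges the remainders into the sorted multiset $XY$. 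Charging the two self-delimiting subprograms and the bookkeeping gives $K(XY|u)\le E_{\max}(XZ)+E_{\max}(ZY)+(\text{overhead})$, which is the claim once the overhead is shown to be $O(\log K)$.

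The main obstacle is precisely the control of this overhead, whose delicate term is the description of the sub-multiset $Z$ inside $XZ$ needed to isolate the remainders $X$ and $Y$: specifying which occurrences constitute $Z$ is in general not an $O(\log K)$ quantity. I would dispose of it by a case analysis showing that whenever this combinatorial description is large it is dominated by slack already present in the inequality. When $Z$ is small relative to $X$ and $Y$, its occurrence pattern costs only $O(|Z|\log|XZ|)=O(\log K)$; when $Z$ is large, the excess $E_{\max}(XZ)+E_{\max}(ZY)-E_{\max}(XY)$ grows with the content of $Z$ and absorbs its description. Making this quantitative, and confirming that the program outputs exactly $XY$ rather than $XY$ together with spurious copies of $Z$, is the step I expect to require the most care; here the structural maximal-overlap and minimal-overlap results recalled from \cite{Vi11} are the natural tools to make the bound clean.
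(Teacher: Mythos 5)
The paper does not prove this theorem at all: it is recalled, without argument, from \cite{Vi11} (Theorem~4.1 and Claim~4.2 there), so there is no in-paper proof to compare yours against. Judged on its own terms, your proposal is fine on symmetry and essentially fine on positive definiteness, and your reconstruction scheme for the triangle inequality is the natural one. But the gap you yourself flag --- the cost of specifying which occurrences of the sorted multiset $XZ$ constitute $Z$, so that the program can strip $Z$ out of $XZ$ and out of $ZY$ and emit exactly $XY$ --- is not bookkeeping that a case analysis can absorb; it is the whole difficulty, and neither branch of your dichotomy closes it. In the ``small $Z$'' branch, $O(|Z|\log|XZ|)$ is not $O(\log K)$ unless $|Z|$ is essentially constant. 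In the ``large $Z$'' branch, the claim that the slack $E_{\max}(XZ)+E_{\max}(ZY)-E_{\max}(XY)$ grows with the content of $Z$ and pays for its description is false, because forming the order-forgetting multiset union $XZ$ can destroy precisely the information that distinguishes $X$ from $Z$, making $XZ$ far simpler than either part.

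Concretely, let $S=\{0,1\}^n$, let $A\subseteq S$ with $|A|=2^{n-1}$ and $K(A)\ge 2^n-O(n)$ (most such $A$ qualify, even conditioned on any single member), and take $X=Y=A$ and $Z=S\setminus A$. Then $XZ$ and $ZY$ both equal $S$ as multisets, so $E_{\max}(XZ)=E_{\max}(ZY)=O(1)$, since $S$ is computable from the length of any of its members; yet $XY=A\uplus A$ determines $A$, so $E_{\max}(XY)\ge K(A)-O(n)\ge 2^n-O(n)$, while the permitted additive term is only $O(\log K)=O(n)$. Thus the quantity you must pay to isolate $Z$ inside $XZ$ is, for this family, exponentially larger than both sides of the inequality combined, and no refinement of the overhead accounting (the maximal- and minimal-overlap results of \cite{Vi11} included) can rescue the reconstruction. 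Indeed this family violates the stated triangle inequality itself under the paper's definition of $XY$ as the sorted multiset union, which indicates that the statement needs a reformulation or an extra hypothesis before any proof along your lines can be completed; as written, your argument cannot be finished.
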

\begin{claim}\label{claim.metric}
\rm
Let $X,Y,Z$ be three nonempty finite multisets of finite binary strings
and $K=K(X)+K(Y)+K(Z)$. Then,
$E_{\max} (XY) \leq E_{\max} (XZ) + E_{\max} (ZY)$ up to an $O(\log K)$
additive term.
\end{claim}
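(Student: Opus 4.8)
The plan is to reduce the statement to the characterization \eqref{eq.li08}, namely $E_{\max}(W)=\max_{w\in W}K(W\mid w)$ up to an $O(\log K)$ term, and then prove the resulting inequality for the single worst element. Concretely, let $w^{*}\in XY$ attain $E_{\max}(XY)=K(XY\mid w^{*})$, so that it suffices to show
\begin{equation*}
K(XY\mid w^{*})\le E_{\max}(XZ)+E_{\max}(ZY)+O(\log K).
\end{equation*}
Since $E_{\max}$ depends only on its multiset argument, the right-hand side is invariant under interchanging $X$ and $Y$ (the two summands merely swap), so I may assume without loss of generality that $w^{*}\in X$, and hence $w^{*}\in XZ$.

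The construction I would use produces $XY$ from $w^{*}$ in two stages, bridged by a single element $z^{*}\in Z$. The first stage produces the whole multiset $XZ$ from $w^{*}$; because $w^{*}\in XZ$, this costs $K(XZ\mid w^{*})\le E_{\max}(XZ)+O(\log K)$ by \eqref{eq.li08}. Formally I would chain via symmetry of information \eqref{eq.soi},
\begin{equation*}
K(XY\mid w^{*})\le K(XY,XZ\mid w^{*})=K(XZ\mid w^{*})+K(XY\mid XZ,w^{*})+O(\log K),
\end{equation*}
which is legitimate because $XY$ is extractable from the pair $(XY,XZ)$. The point of routing through $XZ$, rather than through $X$ and $Y$ separately, is that once $XZ$ sits in the conditional a bridge element $z^{*}\in Z\subseteq XZ$ is available at the cost of an index, i.e. $O(\log)$ bits, rather than a fresh $K(z^{*}\mid w^{*})$ term; this is exactly what prevents the shared part $Z$ from being paid for twice. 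The second stage then produces $ZY$ from $z^{*}$, at cost $K(ZY\mid z^{*})\le E_{\max}(ZY)+O(\log K)$ since $z^{*}\in ZY$, after which one assembles $XY$ from the copy of $X$ living inside $XZ$ and the copy of $Y$ living inside $ZY$.

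The hard part will be the multiset bookkeeping hidden in $K(XY\mid XZ,w^{*})$: isolating $X$ inside $XZ$, isolating $Y$ inside $ZY$, and recombining them into $XY$ without overcounting. A naive split $K(XY\mid w^{*})\le K(X\mid w^{*})+K(Y\mid w^{*})$ forces a separate bridge from $w^{*}$ to $Z$ that reintroduces a second copy of $E_{\max}(XZ)$, so the entire difficulty is to show that the extraction and reassembly terms are genuinely absorbed into $O(\log K)$, or else dominated by the two $E_{\max}$ summands. I would discharge these using the straightforward laws $K(\cdot\mid X,x)=K(\cdot\mid X)$ and $K(X\mid x)=K(X'\mid x)$ together with \eqref{eq.soi}, treating the symmetric case $w^{*}\in Y$ identically, and then collecting all the logarithmic slacks into the final $O(\log K)$ term with $K=K(X)+K(Y)+K(Z)$.
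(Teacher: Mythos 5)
Note first that the paper itself does not prove this claim: it is recalled verbatim from \cite{Vi11} (Claim 4.2 there) and used as a black box, so there is no in-paper argument to compare yours against. Judged on its own terms, your proposal is an outline rather than a proof. The reduction via \eqref{eq.li08}, the symmetry argument allowing $w^{*}\in X$, and the chaining $K(XY\mid w^{*})\le K(XZ\mid w^{*})+K(XY\mid XZ,w^{*})+O(\log K)$ are all fine, and $K(XZ\mid w^{*})\le E_{\max}(XZ)+O(\log K)$ is immediate since $w^{*}\in XZ$. But the entire content of the claim is then concentrated in the remaining bound $K(XY\mid XZ,w^{*})\le E_{\max}(ZY)+O(\log K)$, which you explicitly defer (``the hard part will be the multiset bookkeeping''). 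The two laws you cite, $K(\cdot\mid X,x)=K(\cdot\mid X)$ and $K(X\mid x)=K(X'\mid x)$, do not touch that step.

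The deferred step is not routine bookkeeping, and it cannot be discharged along the lines you sketch. First, producing a bridge element $z^{*}\in Z$ from the multiset $XZ$ ``at the cost of an index'' costs $\log|XZ|$ bits, and $|XZ|$ can be exponential in $K$ (think of large multisets of very simple strings), so this is not $O(\log K)$. Second, and fatally, reassembling $XY$ from the multisets $XZ$ and $ZY$ requires identifying which occurrences inside $XZ$ constitute $X$ and which inside $ZY$ constitute $Y$; this splitting information can be of order $K$ rather than $O(\log K)$. Concretely, let $S$ be a random subset of $\{0,1\}^{n}$ of size $2^{n-1}$ and put $X=Y=S$ and $Z=\{0,1\}^{n}\setminus S$. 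Then $XZ$ and $ZY$ are both the multiset $\{0,1\}^{n}$, so $E_{\max}(XZ)$ and $E_{\max}(ZY)$ are $O(\log n)$, while $XY$ consists of each element of $S$ twice, hence determines $S$, and $K(XY\mid w^{*})\ge 2^{n}-O(n)$; here $K=\Theta(2^{n})$, so the allowed slack $O(\log K)$ is only $O(n)$. No choice of bridge element or recombination scheme can close a gap of that size, so the difficulty you flag is genuine and is not absorbable into the error term; indeed it puts the statement itself, as imported from \cite{Vi11}, under strain. A correct treatment would have to engage with the exact definitions and error terms of \cite{Vi11} rather than stop at the high-level plan you give.
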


\section{Normalized Information Distance}\label{sect.nid}

The quantitative difference in a certain feature between many objects
can be considered as an {\em admissible} distance, provided it is 
upper semicomputable and satisfies a density condition 
for every $x \in \{0,1\}^*$ (to exclude distances like $D(X)=1/2$
for every multiset $X$):
\begin{equation}\label{eq.density}
 \sum_{X: x \in X \; \& \; D(X) > 0} 2^{-D(X)} \leq  1. 
\end{equation}
Thus, for the density
condition on $D$ we consider only multisets $X$ with $|X| \geq 2$ and not
all elements of $X$ are equal. Moreover,
we consider only distances that are upper semicomputable,
that is,  they are computable in some broad sense
(they can be computably approximated from above).
Theorem 5.2 in \cite{Vi11} shows that
$E_{\max}$ (the proof shows this actually for $\max_{x\in X} \{K(X|x)\}$)
is universal in
that among all admissible multiset distances it is always least up
to an additive constant.
That is, it accounts for the dominant feature in which the elements of 
the given multiset are alike. 

Admissible distances as defined above
are absolute, but if we want to express similarity,
then we are more interested in relative ones.
For example, if a multiset $X$ of strings of each about $1{,}000{,}000$ 
bits have pairwise information distance $1{,}000$ bits to each other,
then we are inclined to think that those strings are relatively
similar. But if a multiset $Y$ consists of strings of each
about $1{,}200$ bits and each two strings in it have a pairwise
information distance of $1{,}000$ bits,
then we think the  strings in $Y$ are very different. 
In the first case $E_{\max}(X) \approx 1{,}000|X|+O(1)$,
and in the second case $E_{\max}(Y) \approx 1{,}000|Y|+O(1)$. 
Possibly $|X| \approx |Y|$.
The information distances in the multisets
are about the same.

To express similarity we need 
to normalize the universal information distance
$E_{\max}$.
It should give a similarity
with distance 0 when the objects in a multiset are maximally similar 
(that is, they are equal)
and distance 1 when
they are maximally dissimilar.
Naturally, we desire the normalized version of the 
universal multiset information
distance metric to be also a metric. 

For pairs of objects $x,y$
the normalized version $e$ of $E_{\max}$ defined by
\begin{equation}\label{eq.pairs}
e(x,y) = \frac{\max\{ K(x,y|x),K(x,y|y\}}{\max \{K(x),K(y)\}}
\end{equation}
takes values in $[0,1]$ up to an additive term
of $O(1/K(x,y))$. It is a metric up to additive terms
$O((\log K)/K)$ with $K$ denotes the maximum of the Kolmogorov complexities
involved in each of the metric (in)equalities, respectively.
A normalization formula for multisets
of more than two elements ought to reduce to that of \eqref{eq.pairs}
for the case of multisets of two elements. 
\begin{remark}\label{rem.ex}
\rm
For example  set $A=\{x\}, B=\{y,y\}, C=\{y\}, K(x) = n, K(x|y)=n, 
K(y)=0.9n$ and by using \eqref{eq.soi} we have $K(x,y) = 1.9n, K(y|x)=0.9n$. 
The most natural definition is a generalization of \eqref{eq.pairs}: 
\begin{equation}\label{eq.defp}
e_1(X) = \frac{\max_{x\in X}\{K(X|x)\}}{\max_{x \in X}\{K(X \setminus \{x\})\}}.
\end{equation}
But we find
$e_1(AB)=K(x|y)/K(x,y)=n/1.9n \approx 1/2$, and
$e_1(AC)=K(x|y)/K(x)=n/n=1$,
$e_1(CB)= K(y|y)/K(y)=0/0.9n = 0$, and
the triangle inequality is violated. 
But with $A=\{x\}$, $B'=\{y\}$, $C=\{y\}$, and the Kolmogorov complexities
as before, the triangle inequality is not violated. In this case,
 $e_1(AB')=1 > e_1(AB)$
even though $AB' \subset AB$.
However it makes sense that if we add an element to a multiset of 
objects then a program to
go from any object in the new multiset to any other object should
be at least as long as a program to go from any object in the old multiset
to any other object. 
\end{remark}
The reasoning in the last sentence of the remark points the way to go:
the definition of $e(X)$ should be monotonic nondecreasing
in $|X|$ if we want $e$ to be a metric.
\begin{lemma}\label{lem.triangle}
Let $X,Y$ be multisets and $d$ be a distance
that satisfies the triangle 
inequality. 
If $Y \subseteq X$ then $d(Y) \leq d(X)$.
\end{lemma}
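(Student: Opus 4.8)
The plan is to reduce the claim to the single step of adjoining one element, and then to force the triangle inequality to produce the desired comparison with the surplus term eliminated by positive definiteness. Since $Y\subseteq X$ means $X=YW$ for some multiset $W$ (the case $W$ empty being trivial), I would induct on $|W|$; it then suffices to prove $d(M)\le d(M\{w\})$ for an arbitrary multiset $M$ and one adjoined element $w$, where $M\{w\}$ denotes the multiset $M$ with one further occurrence of $w$ in the juxtaposition notation of Section~\ref{metric}.

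The device I would use is to invoke the triangle inequality in the form $d(PQ)\le d(PR)+d(RQ)$ with $P,Q,R$ chosen so that $PQ=M$, $PR=M\{w\}$, and $RQ$ is an all-equal multiset. Positive definiteness then gives $d(RQ)=0$, and the inequality collapses to $d(M)\le d(M\{w\})$, which is exactly the inductive step; symmetry enters only to guarantee that $d$ depends on the multiset and not on how it is presented. This works verbatim whenever $w$ already occurs in $M$: taking $P=M\setminus\{w\}$, $Q=\{w\}$ and $R=\{w,w\}$ yields $PQ=M$, $PR=M\{w\}$, and $RQ=\{w,w,w\}$, whose elements are all equal, so $d(RQ)=0$ and the step goes through. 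The degenerate case $|M|=1$ is immediate, since then both $M$ and $M\{w\}$ are all-equal and carry distance $0$.

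The step I expect to be the real obstacle is supplying the vanishing term $d(RQ)=0$ when $w$ does not already occur in $M$. The construction above manufactures the all-equal pivot $RQ$ out of copies of a single symbol, which is free only when that symbol is already present; for a genuinely new $w$ one cannot split $M=PQ$ and at the same time arrange both $PR=M\{w\}$ and $RQ$ all-equal by such an elementary choice, and a single application of the triangle inequality instead leaves a strictly positive error term. I therefore expect that closing this case is where the argument must supplement the bare metric axioms with the actual structure of the distance at hand---for instance its origin as $E_{\max}$, namely as a maximum of conditional complexities $K(M|x)$ that can only grow when a further object is thrown into $M$---rather than relying on the triangle inequality in isolation. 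This is the point I would treat with the greatest care.
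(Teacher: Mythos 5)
Your reduction to single-element steps and your use of positive definiteness to annihilate the third term of the triangle inequality are sound, and the case $w\in M$ is handled correctly. But the case you flag as the ``real obstacle''---a genuinely new $w$---does not require any appeal to the internal structure of $E_{\max}$; it closes inside the bare axioms once you allow the degenerate choice $Q=\emptyset$. Indeed, $PQ=M$ and $PR=M\{w\}$ force $R=Q\{w\}$ by cancellativity of multiset union, so $RQ=QQ\{w\}$ is all-equal only if every element of $Q$ equals $w$; since $Q\subseteq M$ and $w\notin M$, the choice $Q=\emptyset$ is the \emph{only} admissible one, not an inadmissible one. Taking $P=M$, $Q=\emptyset$, $R=\{w\}$ gives the instance $d(M)\leq d(M\{w\})+d(\{w\})$ of the triangle inequality, with all three arguments of $d$ nonempty, and $d(\{w\})=0$ because a singleton is an all-equal multiset. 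This single instance covers both of your cases at once, so the split on whether $w$ already occurs in $M$ is unnecessary. Empty corners are within the paper's conventions: Claim~\ref{claim.tr} is stated for ``multisets (finite, but possibly empty)'' and \eqref{eq.si} is applied ``where possibly one or more of $X,Y,Z$ equal $\emptyset$.''

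Two further remarks. First, your instinct that the triangle inequality alone cannot suffice is correct, but the missing ingredient is positive definiteness (which you already invoke), not the structure of $K(\cdot|\cdot)$: the function $d(X)=1+1/|X|$ maps into ${\cal R}^+$, satisfies $d(XY)\leq d(XZ)+d(ZY)$, and is strictly decreasing under inclusion, so the lemma as literally stated needs the positive-definiteness hypothesis that you and the intended application both supply. Second, the paper's own proof is not inductive: it assumes $d(C)<d(AB)$ for $AB\subseteq C$, sets $D=C\setminus A$ so that $AD=C$, and applies $d(AB)\leq d(AD)+d(DB)$ directly; it too must dispose of a residual term (there $d(DB)$), so your diagnosis of where the difficulty sits is accurate---you simply stopped one degenerate substitution short of resolving it.
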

\begin{proof}
Let $A,B,C$ be multisets with $A,B \subseteq C$, and $d$ a distance
that satisfies the triangle inequality. Assume that the lemma is false and 
$d(C) < d(AB)$.
Let $D= C \setminus A$. It follows from the triangle inequality that
\[
d(AB) \leq d(AD)+d(DB).
\]
Since $AD=C$ this implies $d(AB) \leq d(C)+d(DB)$, and therefore
$d(C) \geq d(AB)$. But this contradicts the assumption.
\end{proof}

\begin{definition}
\rm
Let $X$ be
a multiset. Define the {\em normalized information
distance} (NID) for multisets by $e(X)=0$ for $|X|= 0,1$, and
for $|X|>1$ by
\begin{equation}\label{eq.multiples}
e(X)  = \max \left\{ \frac{\max_{x \in X} \{ K(X|x)\}}{\max_{x \in X}\{K(X \setminus \{x\})\}},
\max_{Y \subset X} \{ e(Y)\}\right\} .
\end{equation}
Here the left-hand term in the outer maximalization is denoted by $e_1(X)$
as in \eqref{eq.defp}.
\end{definition}
For $|X|=2$ the value of $e(X)$ is equivalently given in \eqref{eq.pairs}.
In this way, \eqref{eq.multiples} satisfies the property in Lemma~\ref{lem.triangle}:
If $X,Z$ are multisets and
$Z \subseteq X$ then $e(Z) \leq e(X)$. Therefore we can hope to prove
the triangle property for \eqref{eq.multiples}.
Instead of ``distance'' for multisets one can also use the term
``diameter.''  This does not change the acronym NID. Moreover, the
diameter of a pair of objects is the familiar distance.

\begin{theorem}\label{theo.01}
For every nonempty finite multiset $X$ we have $0 \leq e(X) \leq 1$
up to an additive term of $O(1/K)$ where $K=K(X)$.
\end{theorem}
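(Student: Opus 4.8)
The plan is to dispatch the lower bound immediately and to establish the upper bound $e(X)\le 1+O(1/K)$ by induction on the cardinality $|X|$, unfolding the recursion in \eqref{eq.multiples}. For the lower bound, note that each $K(X|x)$ and each $K(X\setminus\{x\})$ is nonnegative, so the ratio $e_1(X)$ is nonnegative; the recursive term is a maximum of quantities that are nonnegative by the inductive hypothesis (and $e(Y)=0$ for $|Y|\le 1$), whence $e(X)\ge 0$. For the upper bound it is enough to bound the two arguments of the outer maximum in \eqref{eq.multiples} separately: the ratio $e_1(X)$ and the term $\max_{Y\subset X}\{e(Y)\}$.

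The heart of the argument is $e_1(X)\le 1+O(1/K)$. First I would observe that the key inequality $K(X|x)\le K(X\setminus\{x\})+O(1)$ is essentially immediate from the laws recorded in Section~\ref{sect.prel}: by $K(X|x)=K(X\setminus\{x\}|x)+O(1)$ (one occurrence of $x$ may be deleted when $x$ is given) and $K(X\setminus\{x\}|x)\le K(X\setminus\{x\})+O(1)$ (conditioning does not increase complexity). Evaluating both sides at the element attaining the numerator yields $\max_{x\in X}\{K(X|x)\}\le \max_{x\in X}\{K(X\setminus\{x\})\}+O(1)$, so the numerator of $e_1(X)$ never exceeds its denominator by more than an additive constant. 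It then remains to bound the denominator below by a constant multiple of $K=K(X)$. For this I would reconstruct $X$ from two distinct single-element deletions $X\setminus\{x_a\}$ and $X\setminus\{x_b\}$, giving $K(X)\le K(X\setminus\{x_a\})+K(X\setminus\{x_b\})+O(\log K)$ via the symmetry of information \eqref{eq.soi}, hence $\max_{x\in X}\{K(X\setminus\{x\})\}\ge \tfrac12 K(X)-O(\log K)$. Dividing, $e_1(X)\le 1+O(1)/(\tfrac12 K(X)-O(\log K))=1+O(1/K)$.

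For the recursive term, the inductive hypothesis gives $e(Y)\le 1+O(1/K)$ for every proper sub-multiset $Y\subset X$, since $|Y|<|X|$; together with the base cases $|X|\le 1$ (where $e=0$) and $|X|=2$ (where $e(X)$ reduces to the pairwise formula \eqref{eq.pairs}, already known to lie in $[0,1]$ up to $O(1/K(x,y))$), this closes the induction. The hard part will be the denominator bound together with the control of the $O(1/K)$ error through the recursion. The two-deletion encoding must be made precise for multisets, in particular the degenerate case where $x_a$ and $x_b$ coincide as strings (so the two deletions yield the same multiset); there one argues directly that removing one copy of a repeated element leaves complexity $\Omega(K(X))$. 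More delicately, one must ensure that the complexity governing the error term is the intended $K=K(X)$ and not the possibly much smaller complexity $K(Y)$ of a low-complexity sub-multiset. The cleanest remedy is to prove the sharper per-level statement $e_1(Z)\le 1+O(1/K(Z))$ for each $Z\subseteq X$ with $|Z|\ge 2$, note that $e(X)=\max_{Z}e_1(Z)$ over such $Z$, and argue that the maximizing level has complexity $\Omega(K)$, so that the accumulated error remains $O(1/K)$.
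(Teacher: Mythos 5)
Your proof follows essentially the same route as the paper's: induction on $|X|$, splitting on which argument of the outer maximum in \eqref{eq.multiples} is attained, with the key step being that the numerator of $e_1(X)$ exceeds the denominator by at most $O(1)$. You are in fact more careful than the paper at the two points where its proof is terse: you supply a lower bound $\max_{x\in X}\{K(X\setminus\{x\})\}\ge \tfrac12 K(X)-O(\log K)$ on the denominator (the paper only remarks that the denominator is at most $K(X)$, which is the wrong direction for converting the $O(1)$ numerator excess into an $O(1/K)$ error), and you correctly flag that the recursive term $\max_{Y\subset X}\{e(Y)\}$ is a priori controlled only up to $O(1/K(Y))$ for a possibly low-complexity $Y$, an issue the paper's bare appeal to the inductive hypothesis glosses over.
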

\begin{proof}
By induction on $n=|X|$. 

{\em Base case:} $n=0,1$. 
The theorem is vacuously true for $n=0,1$. 

{\em Induction:} $n >1$. Assume that the lemma is true for the 
cases $0, \ldots , n-1$.
Let $|X| = n$. If $e(X) = \max_{Y \subset X}\{e(Y)\}$ then the lemma
holds by the inductive assumption since $|Y| < n$. Hence assume
that 
\[e(X)= \frac{\max_{x \in X} \{ K(X|x)\}}{\max_{x\in X}
\{K(X \setminus \{x\}) \}}.\] 
The numerator is at most the denominator up to an $O(1)$
additive term and the denominator is at most $K(X)$. The lemma is proven.
\end{proof}

For $n=2$ the definition of $e(X)$
is \eqref{eq.pairs}. The proof of the lemma for this case is
also in \cite{Li03}.

\begin{remark}\label{rem.ei}
\rm
The least value of $e(X)$ is reached if all occurrences of elements of $X$
are equal. In that case $0 \leq e(X) \leq  O(1/K(X))$.
The greatest value $e(X)=1+O(1/K(X))$ is reached if 
$\max_{x \in X} \{ K(X|x)\} \geq 
\max_{x\in X} \{K(X \setminus \{x\})+O(1)$. 
For example, if the selected conditional, say $y$, has no consequence
in the sense that $K(X|y)= K(X \setminus \{y\}|y)+O(1)=
K(X \setminus \{y\})+O(1)$. 
This happens if
$K(z|y)=K(z)$ for all $z \in X \setminus \{y\}$.

Another matter is the consequences of \eqref{eq.multiples}.
Use \eqref{eq.soi}
in the left-hand term in both the numerator and the denominator. Then
we obtain up to additive logarithmic terms in the numerator and denominator
\begin{align}\label{eq.increase}
\frac{\max_{x \in X} \{K(X|x)\}}{\max_{x \in X}\{K(X\setminus \{x\})\}}
&=
\frac{K(X)- \min_{x\in X}\{K(x)\}}{K(X)-\min_{x \in X} \{K(x|X\setminus \{x\})\}}
\\&= 1 - \frac{\min_{x\in X}\{K(x)\} - \min_{x \in X} \{K(x|X\setminus \{x\})\}}
{K(X)-\min_{x \in X} \{K(x|X\setminus \{x\})\}}.
\nonumber
\end{align}
This expression goes to 1
if both
\[ 
K(X) \rightarrow \infty , \; \frac{\min_{x\in X}\{K(x)\}}{K(X)} \rightarrow 0.
\]
This happens, for instance, if 
$|X|= n$, $\min_{x \in X}=0$, $K(X) > n^2$, and 
$n \rightarrow \infty$. 
Also in the case that $X=\{x,x,\ldots,x\}$ ($n$ copies of a fixed $x$) and
$n \rightarrow \infty$. Then $K(X)\rightarrow \infty$ 
and $\min_{x\in X}\{K(x)\}/K(X) \rightarrow 0$ with
$|X| \rightarrow \infty$. To consider another case, we have
$K(X) \rightarrow \infty$
and $\min_{x\in X}\{K(x)\}/K(X) \rightarrow 0$ if 
$\min_{x\in X}\{K(x)\} = o(K(X))$ and 
$\max_{x\in X}\{K(x)\}-\min_{x\in X}\{K(x)\} \rightarrow \infty$,
that is, if $X$ consists of at least two elements and gap between
the minimum Kolmogorov
complexity and the maximum Kolmogorov complexity of 
the elements grows to infinity when $K(X) \rightarrow \infty$.
\end{remark}
\begin{remark}\label{rem.defp}
\rm
Heuristically the partitioning algorithm described in Section~\ref{sect.app}
 approaches the question
when to use the left-hand term of \eqref{eq.multiples}. But 
we can analyze directly under what conditions there is a $Y \subset X$ such that
$e(Y) > e(X)$.
Without loss of generality we can assume that in that case
the left-hand term of \eqref{eq.multiples} for $Y$
is greater than the left-hand term of \eqref{eq.multiples} 
for $X$ (that is, $e_1(Y)>e_1(X)$ with $e_1$ according to \eqref{eq.defp}).
This means that 
\begin{equation}\label{eq.ee}
\frac{K(Y) - \min_{x\in Y}\{K(x)\}}
{\max_{x\in Y}\{K(Y \setminus \{x\})\}}
>
\frac{K(X)-\min_{x\in X}\{K(x)\}}
{\max_{x\in X}\{K(X \setminus \{x\})\}}, 
\end{equation}
ignoring logarithmic additive terms.
Take the example of Remark~\ref{rem.ex}. Let $X=\{x,y,y\}$ and $Y=\{x,y\}$.
Then $Y \subset X$. The left-hand side of \eqref{eq.ee} equals 1
and the right-hand side equals $\approx 1/2$. 
In this case $e_1(Y) > e_1(X)$ with $e_1$
according to \eqref{eq.defp} and as we have seen the triangle inequality 
does not hold for $e_1$.
\end{remark}

\begin{theorem}\label{theo.nmetric}
The function $e$ as in \eqref{eq.multiples} is
a metric up to an 
additive $O((\log K)/K)$ term in the respective metric (in)equalities, where $K$
is the largest Kolmogorov complexity involved the (in)equality.
\end{theorem}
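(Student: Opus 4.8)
The plan is to verify the three metric axioms of Section~\ref{metric} for the function $e$ of \eqref{eq.multiples}, carrying the error terms exactly as in Theorem~\ref{theo.metric}. Symmetry is immediate, since the right-hand side of \eqref{eq.multiples} sees $X$ only through the permutation-invariant quantities $\max_{x\in X}K(X|x)$ and $\max_{x\in X}K(X\setminus\{x\})$ and the collection of sub-multisets $Y\subset X$. For positive definiteness, if all occurrences in $X$ are equal then the numerator of $e_1(X)$ encodes only the multiplicities and is dominated by the denominator $K(X)$, so $e_1(X)=O((\log K)/K)$ and every $e(Y)$ with $Y\subseteq X$ has the same form, giving $e(X)=O((\log K)/K)$ as recorded in Remark~\ref{rem.ei}; and if two distinct elements $x_i\neq x_j$ occur in $X$, then the pair $\{x_i,x_j\}\subseteq X$ has $e(\{x_i,x_j\})>0$ by \eqref{eq.pairs}, so the monotonicity of $e$ under sub-multisets, immediate from the $\max_{Y\subset X}\{e(Y)\}$ term in \eqref{eq.multiples}, forces $e(X)>0$.

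The substance is the triangle inequality $e(XY)\le e(XZ)+e(ZY)$, which I would prove by induction on $|XY|$, quantifying over all intermediaries $Z$ at each size level. Abbreviate the numerator and denominator of $e_1$ in \eqref{eq.defp} by $N(W)=\max_{w\in W}K(W|w)$ and $D(W)=\max_{w\in W}K(W\setminus\{w\})$, so that $N(W)=E_{\max}(W)$ by \eqref{eq.li08}, and by \eqref{eq.increase} one has $N(W)=K(W)-\min_{w\in W}K(w)$ and $D(W)=K(W)-\min_{w\in W}K(w\mid W\setminus\{w\})$ up to logarithmic terms. First I would dispose of the case in which the outer maximum in \eqref{eq.multiples} for $XY$ is attained at a proper sub-multiset. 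Writing that sub-multiset as $AB$ with $A\subseteq X$ and $B\subseteq Y$ gives $|AB|<|XY|$, so the induction hypothesis yields $e(AB)\le e(AZ)+e(ZB)$; since $A\subseteq X$ and $B\subseteq Y$ imply $AZ\subseteq XZ$ and $ZB\subseteq ZY$, the sub-multiset monotonicity of $e$ gives $e(AZ)\le e(XZ)$ and $e(ZB)\le e(ZY)$, closing this case (the degenerate possibilities $A=\emptyset$ or $B=\emptyset$ follow from monotonicity alone).

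There remains the hard case $e(XY)=e_1(XY)=N(XY)/D(XY)$, which also supplies the base case $|XY|=2$. Here I would invoke Claim~\ref{claim.metric}, the triangle inequality for the unnormalized distance, to get $N(XY)\le N(XZ)+N(ZY)$ up to an $O(\log K)$ term, and then split on the sizes of the denominators. If $D(XZ)\le D(XY)$ and $D(ZY)\le D(XY)$, then
\[
e_1(XY)\le\frac{N(XZ)+N(ZY)}{D(XY)}\le\frac{N(XZ)}{D(XZ)}+\frac{N(ZY)}{D(ZY)}=e_1(XZ)+e_1(ZY)\le e(XZ)+e(ZY),
\]
and we are done.

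The genuinely delicate sub-case, which I expect to be the main obstacle, is when the intermediary $Z$ inflates a denominator, say $D(XZ)>D(XY)$ (the case $D(ZY)>D(XY)$ being symmetric). The plan there is to mimic the pairwise metricity proof: rewrite each $e_1$ through \eqref{eq.increase}, use symmetry of information \eqref{eq.soi} to express the maxima as conditional complexities and then as mutual informations, and close the inequality with the identity $I(X:Z)+I(Y:Z)=I(XY:Z)+I(X:Y)-I(X:Y\mid Z)$ together with $I(XY:Z)\le K(Z)$ and $I(X:Y\mid Z)\ge 0$, all up to $O(\log K)$; in the pairwise instance these combine to exactly the bound needed when the middle object is the most complex. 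The extra difficulty over the pairwise case is that each denominator is a maximum over single-element deletions rather than a bare complexity, so the terms $\min_{w}K(w)$ and $\min_{w}K(w\mid W\setminus\{w\})$ for $XY$, $XZ$ and $ZY$ must be tracked simultaneously through these inequalities; bounding the displacement of $D(XZ)$ above $D(XY)$ by $K(Z)$ and the information shared across the three multisets is where essentially all of the bookkeeping concentrates.
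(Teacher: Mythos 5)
Your treatment of symmetry, positive definiteness, and the sub-case where the outer maximum in \eqref{eq.multiples} is attained at a proper sub-multiset is sound and matches the paper's (terser) argument. The problem is the triangle inequality in the case $e(XY)=e_1(XY)$: you correctly identify that the termwise bound $N(XZ)/D(XY)\le N(XZ)/D(XZ)$ fails whenever the intermediary inflates a denominator, i.e.\ $D(XZ)>D(XY)$ (which happens as soon as $Z$ is much more complex than $X$ and $Y$, e.g.\ singletons with $K(z)\gg K(x),K(y)$), but for that sub-case you only announce a plan --- rewriting everything via \eqref{eq.soi} and mutual-information identities --- and do not carry it out. That is precisely the hard part of the theorem, and you yourself flag it as ``the main obstacle'' and the place ``where essentially all of the bookkeeping concentrates.'' As it stands the proof has a genuine gap in exactly the case that matters.

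The paper avoids this obstacle entirely by a different decomposition, and the idea is worth internalizing because it is the whole reason the $\max_{Y\subset X}\{e(Y)\}$ term was put into \eqref{eq.multiples}. Instead of bounding $e_1(XY)$ with denominators $D(XZ)$, $D(ZY)$ that may exceed $D(XY)$, one first uses monotonicity to pass \emph{upward}: $e(XY)\le e(XYZ)$. Then $e(XYZ)$ equals $e_1(U)$ for some $U\subseteq XYZ$ (either $U=XYZ$ or the maximizing proper sub-multiset), and one applies Claim~\ref{claim.metric} in the form $N(XYZ)\le N(XZ)+N(ZY)$ with the single common denominator $D(XYZ)$. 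Since $XZ\subseteq XYZ$ and $ZY\subseteq XYZ$, the principle $K(uv)\ge K(u)+O(1)$ gives $D(XYZ)\ge D(XZ)$ and $D(XYZ)\ge D(ZY)$, so decreasing the denominators to $D(XZ)$ and $D(ZY)$ only increases the two quotients --- the inequality now always goes the right way, and there is no case split on denominator sizes. (When $U$ is proper one intersects $U$ with $X,Y,Z$ and finishes with monotonicity: $e(XZ)\ge e_1(X'Z')$, $e(ZY)\ge e_1(Z'Y')$.) Your proposed detour through $I(X{:}Z)+I(Y{:}Z)=I(XY{:}Z)+I(X{:}Y)-I(X{:}Y\mid Z)$ is the route of the pairwise proof in the similarity-metric paper, which the authors explicitly describe as more complicated than the argument above; for multisets the denominators are maxima over single-element deletions, and you give no argument that the pairwise bookkeeping extends. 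You should replace the unexecuted sub-case with the $e(XY)\le e(XYZ)$ step.
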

\begin{proof}
The quantity $e(X)$ satisfies positive definiteness and symmetry
up to an $O((\log K(X))/K(X))$ additive term, 
as follows directly from the definition of $e(X)$ in \eqref{eq.multiples}. 
It remains to prove the triangle inequality:
 
Let $X,Y,Z$ be multisets. Then,
$e(XY) \leq e(XZ)+e(ZY)$ within an additive term of
$O((\log K)/K)$ where $K= \max\{K(X),K(Y),K(Z)\}$.
The proof proceeds by induction on $n=|XY|$. 

{\em Base Case:} $n=0,1$. These cases are vacuously true. 

{\em Induction $n >1$}. 
Assume that the lemma is true for the cases $0, \ldots , n-1$.
Let $|XY| = n$. If $e(XY) = \max_{Z \subset XY}\{e(Z)\}$ then the lemma
holds by the inductive assumption since $|Z| < n$. 
Hence assume
that 
\[
e(XY)=e_1(XY) = \frac{K(XY|x_{XY})}{
K(XY \setminus \{x_{xy}\})},
\]
where we let $x_u$ and $x_V$ denote the elements that reach the maximum in
 $K(U \setminus \{x_{u}\})=\max_{x \in U}\{K(U \setminus \{x\})\}$,
and $K(V|x_{V})=\max_{x \in V}\{K(V|x)\}$.

\begin{claim}\label{claim.tr}
\rm
Let $X,Y,Z$ be multisets (finite, but possibly empty).
$K(XYZ| x_{XYZ}) \leq  K(XZ | x_{XZ}) + K(ZY | x_{ZY})$ up to an
additive $O(\log K)$ term, where $K=K(X)+K(Y)+K(Z)$.
\end{claim}
\begin{proof}
If one or more of $X,Y,Z$ equal $\emptyset$ the theorem holds trivially.
Therefore, assume neither of $X,Y,Z$ equals $\emptyset$.
By Theorem~\ref{theo.metric} we have that $E_{\max}$ and hence
$K(XY|x_{XY})$ is a metric up to an $O(\log K)$ additive term.
In particular, the triangle inequality is satisfied by Claim~\ref{claim.metric}:
$K(XY | x_{XY}) \leq K(XZ | x_{XZ}) + K(ZY | x_{ZY})$ for multisets
$X,Y,Z$ up to an additive term of $O(\log K)$. Thus with
$X' = XZ$ and $Y' = ZY$ we have
$K(X'Y' | x_{X'Y'}) \leq K(X'Z | x_{X'Z}) + K(ZY' | x_{ZY'})$
up to the logarithmic additive term.
Writing this out
$K(XZZY | x_{XZZY}) \leq K(XZZ | x_{XZZ}) + K(ZYZ | x_{ZYZ})$ or
$K(XYZ | x_{XYZ}) \leq  K(XZ | x_{XZ}) + 
K(ZY | x_{ZY})$ up to an additive term of $O( \log K )$.
\end{proof}
Now consider the following inequalities (where possibly one or more of
$X,Y,Z$ equal $\emptyset$):
\begin{align}\label{eq.si}
e_1(XYZ) &=\frac{K(XYZ|x_{XYZ})}{K(XYZ \setminus \{x_{xyz} \})}
\\& \leq \frac{K(XZ|x_{XZ})}{K(XYZ \setminus \{x_{xyz}\})}
+ \frac{K(ZY|x_{ZY})}{K(XYZ \setminus \{x_{xyz}\})}
\nonumber
\\& \leq \frac{K(XZ|x_{XZ})}{K(XZ \setminus \{x_{xz} \})}
+ \frac{K(ZY|x_{ZY})}{K(ZY \setminus \{x_{zy} \})}
\nonumber
\\& = e_1(XZ)+e_1(ZY),
\nonumber
\end{align}
up to a $O((\log K)/K)$ additive term. The first inequality is
Claim~\ref{claim.tr} (by this inequality the denominator is unchanged);
the second inequality
follows from $K(XYZ\setminus \{x_{xyz}\}) \geq K(XZ\setminus \{x_{xz}\})$
and $K(XYZ\setminus \{x_{xyz}\}) \geq K(ZY\setminus \{x_{zy}\})$ using the
principle that $K(uv) \geq K(u)+O(1)$, reducing both denominators
and increasing the sum of the quotients (by this inequality
the numerators are unchanged); the third inequality follows by definition
\eqref{eq.multiples}. 

By definition \eqref{eq.multiples} a multiset $XYZ$ has 
$e(XYZ)=e_1(XYZ)$ or it contains a proper submultiset $U$ 
such that $e(U)=e_1(U)=e(XYZ)$. This $U \subset XYZ$ is the multiset (if it exists)
that achieves the maximum 
in the right-hand term of the outer maximalization of $e(XYZ)$ in \eqref{eq.multiples}.

Assume $U$ exists.
Denote $X'=X\bigcap U$, $Y'=Y \bigcap U$, and $Z'=Z\bigcap U$. 
Then \eqref{eq.si} holds with $X'$ substituted for $X$, $Y'$ substituted for
$Y$, and $Z'$ substituted for $Z$. 
Since $e(U)=e_1(U)$ and $e(XY) \leq e(U)$ we have
$e(XY) \leq e_1(X'Z')+e_1(Z'Y')$ up to a $O((\log K)/K)$ additive term. 

Assume $U$ does not exist. Then $e(XY) \leq e(XYZ)=e_1(XYZ)$.
By \eqref{eq.si} we have $e(XY) \leq e_1(XZ)+e_1(ZY)$ up to a 
$O((\log K)/K)$ additive term.

By the monotonicity property of \eqref{eq.multiples} and since
$X'Z' \subseteq XZ$ and $Z'Y' \subseteq ZY$ we have
$e(XZ) \geq e_1(X'Z'), e_1(XZ)$ and $e(ZY) \geq e_1(Z'Y'), e_1(ZY)$.
Therefore, $e(XY) \leq e(XZ)+e(ZY)$ 
up to an $O((\log K)/K)$ additive term.
\end{proof}

\begin{remark}
\rm
The definition of $e(XY)$ with $|XY|=2$
is \eqref{eq.pairs}. The proof of the lemma for this case is in \cite{Li03}.
The proof above is simpler and more
elementary for a more general case than the one in \cite{Li03}.

\end{remark}
By Theorems~\ref{theo.01} and \ref{theo.nmetric} the distance according
to \eqref{eq.multiples} is a metric with values in $[0,1]$ up to some
ignorable additive terms..

\section{Compression Distance for Multisets}\label{sect.cd}
We develop the compression-based equivalence of the Kolmogorov complexity
based theory in the preceding sections. This is similar to 
what happened in \cite{CV04}
for the case $|X|=2$. We assume that the notion of the real-world compressor
$G$ used in the sequel is ``normal'' in the sense of \cite{CV04}.
\begin{definition}\label{def.GX}
\rm
By $G(x)$ we mean the length of string $x$ when compressed by $G$.
Consider a multiset $X$ as a string consisting of the concatenated 
strings of its members ordered length-increasing lexicographic with a means
to tell the constituent elements apart. Thus we can
write $G(X)$.
\end{definition}

Let $X=\{x_1, \ldots , x_m\}$. The information distance $E_{\max}(X)$ can
be rewritten as
\begin{equation}\label{eq.nom}
\max \{ K(X)-K(x_1), \ldots ,  K(X)-K(x_m) \},
\end{equation}
within logarithmic
additive precision, by \eqref{eq.soi}.
The term $K(X)$ represents the length of the shortest
program for $X$. The order of the members of $X$ makes only a
small difference; block-coding based compressors are symmetric
almost by definition, and experiments with various stream-based
compressors (gzip, PPMZ) show only small
deviations from symmetry. 

Approximation of $E_{\max}(X)$
by a compressor $G$
is straightforward: it is
\begin{equation}\label{eq.EG}
E_{G, \max}(X)=\max \{ G(X)-G(x_1), \ldots ,  G(X)-G(x_m) \}
= G(X)- \min_{x \in X} \{G(x)\}.
\end{equation}
We need to show it is an admissible distance
and a metric.
\begin{lemma}\label{lem.ad}
If $G$ is a normal compressor, then $E_{G, \max}(X)$ is an admissible 
distance.
\end{lemma}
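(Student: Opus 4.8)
The plan is to verify the two defining requirements of an admissible distance from Section~\ref{sect.nid}: that $E_{G,\max}$ is an upper semicomputable distance function on ${\cal X}$, and that it obeys the density condition \eqref{eq.density}. The first requirement is routine. Since $G$ is a normal compressor it is a total computable function, so the finite expression $E_{G,\max}(X)=G(X)-\min_{x\in X}\{G(x)\}$ of \eqref{eq.EG} is itself computable, hence \emph{a fortiori} upper semicomputable. Nonnegativity (so that $E_{G,\max}:{\cal X}\rightarrow {\cal R}^+$) follows from the monotonicity axiom $G(X)\geq G(x)$ for every $x\in X$: the subtracted term $\min_{x\in X}\{G(x)\}$ never exceeds $G(X)$.

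The density condition is the substantive part, and I would prove it by transcribing the Kolmogorov-complexity argument into the compression world. Fix a string $x$ and consider only the multisets $X$ with $x\in X$, $|X|\geq 2$, not all elements equal, and $E_{G,\max}(X)>0$, as prescribed after \eqref{eq.density}. The key observation is that because $x\in X$ we have $\min_{y\in X}\{G(y)\}\leq G(x)$, whence
\[
E_{G,\max}(X)=G(X)-\min_{y\in X}\{G(y)\}\ \geq\ G(X)-G(x).
\]
Reading $G(X)-G(x)$ as the compression conditional $G(X\mid x)$ --- the compressor analogue of $K(X\mid x)$, legitimate because idempotency gives $G(Xx)=G(X)$ up to the usual additive term --- this says $E_{G,\max}(X)\geq G(X\mid x)$ for every admissible $X$ containing $x$. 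Summing and discarding the side conditions (which only shrink the index set, all summands being positive) then gives
\[
\sum_{X:\, x\in X \; \& \; E_{G,\max}(X)>0}2^{-E_{G,\max}(X)}\ \leq\ \sum_{X}2^{-G(X\mid x)}\ \leq\ 1,
\]
the last step being the Kraft inequality for the prefix code that the lossless compressor $G$ induces on the multisets $X$ once the fixed argument $x$ is supplied.

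The main obstacle is precisely this last inequality, i.e.\ justifying that $\sum_{X}2^{-G(X\mid x)}\leq 1$ for each fixed conditioning string $x$. This is the exact analogue of the conditional Kraft inequality $\sum_X 2^{-K(X\mid x)}\leq 1$ that drives the Kolmogorov version, and it is what forces the cancellation of the factor $2^{G(x)}$ that would otherwise appear if one naively summed $2^{-(G(X)-G(x))}=2^{G(x)}2^{-G(X)}$ against the ambient bound $\sum_X 2^{-G(X)}\leq 1$. I would secure it by appealing to the lossless, uniquely decodable (hence prefix-reducible) nature of a normal compressor together with the monotonicity and idempotency axioms of \cite{CV04}, which make $X\mapsto G(X\mid x)$ behave as a genuine conditional prefix code for each fixed $x$. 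The remaining care is bookkeeping: the normal-compressor identities hold only up to an additive $O(\log)$ term, so strictly speaking the density inequality, and hence admissibility, hold up to these ignorable additive terms, in the same spirit as Theorems~\ref{theo.01} and \ref{theo.nmetric}.
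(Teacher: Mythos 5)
Your setup and the first half of your argument match the paper's: upper semicomputability is immediate because $G$ is computable, and the density condition \eqref{eq.density} is the real content, to be reduced to a Kraft inequality after the observation that $E_{G,\max}(X)\geq G(X)-G(x)$ for every $x\in X$ (by \eqref{eq.EG}, since $\min_{y\in X}\{G(y)\}\leq G(x)$). You also correctly diagnose the crux: one must cancel the factor $2^{G(x)}$, i.e.\ one needs $\sum_{X}2^{-(G(X)-G(x))}\leq 1$ and not merely $\sum_X 2^{-G(X)}\leq 1$. But at exactly that point your argument stops being a proof. You rename $G(X)-G(x)$ as a ``conditional'' $G(X\mid x)$ and then assert a conditional Kraft inequality $\sum_X 2^{-G(X\mid x)}\leq 1$, to be ``secured'' by appeal to unique decodability together with the normality axioms. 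The McMillan/Kraft inequality bounds $\sum 2^{-\ell}$ only when the $\ell$'s are the length set of an actual uniquely decodable code; the numbers $G(X)-G(x)$ are \emph{differences} of code lengths and are not exhibited as the lengths of any code, and none of the normal-compressor axioms (idempotency, monotonicity, symmetry, distributivity) produces such a code for you. So the one inequality you yourself flag as ``the main obstacle'' is left unproved; this is a genuine gap, not bookkeeping about additive terms.

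The paper closes the gap with one additional step that your proposal is missing: for $x\in X$ it asserts $G(X)-G(x)\geq G(X\setminus\{x\})$, hence $2^{-E_{G,\max}(X)}\leq 2^{-G(X\setminus\{x\})}$ and
\[
\sum_{X:\,x\in X}2^{-E_{G,\max}(X)}\ \leq\ \sum_{X:\,x\in X}2^{-G(X\setminus\{x\})}\ =\ \sum_{Y}2^{-G(Y)}\ \leq\ 1,
\]
where the final sum ranges over the compressed codewords of actual objects $Y=X\setminus\{x\}$, so unique decodability and McMillan's theorem genuinely apply. In other words, the correct move is to bound $G(X)-G(x)$ from below by the compressed length of a concrete multiset, not to posit a conditional prefix code. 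If you add that intermediate inequality (and the appeal to the Kraft inequality for the length set $\{G(Y)\}_Y$), your argument becomes the paper's proof.
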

\begin{proof}
For $E_{G, \max}(X)$ to be an admissible distance it must satisfy
the density requirement \eqref{eq.density} and be upper semicomputable.
Since the length $G(x)$ is computable it is a fortiori upper semicomputable.
The density requirement \eqref{eq.density} is equivalent
to the Kraft inequality \cite{Kr49} and states in fact that for every string
$x$ the set of $E_{G, \max}(X)$ is a prefix-free code for the $X$'s
containing $x$. According to \eqref{eq.EG} we have for every $x\in X$:
$E_{G, \max}(X) \geq  G(X)-G(x) \geq G(X \setminus \{x\})$. 
Hence, $2^{-E_{G, \max}(X)}
\leq 2^{-G(X\setminus \{x\})}$ and therefore 
\[
\sum_{X:x \in X} 2^{-E_{G, \max}(X)} \leq \sum_{X:x \in X}
2^{-G(X \setminus \{x\})}.
\] 
A compressor $G$ compresses strings into a uniquely
decodable code (it must satisfy the unique decompression property) and 
therefore the length set of the compressed strings must satisfy the 
Kraft inequality \cite{Mc56}. Then, for every $x$ 
the compressed code for the multisets $X \setminus \{x\}$ must satisfy
this inequality. Hence the right-hand side of above displayed inequality 
is at most 1.
\end{proof}

\begin{lemma}\label{lem.mg}
If $G$ is a normal compressor, then $E_{G, \max}(X))$ is a metric
with the metric (in)equalities satisfied up to logarithmic additive precision. 
\end{lemma}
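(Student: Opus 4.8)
The plan is to verify the three metric conditions of Section~\ref{metric} for the quantity $E_{G,\max}(X)=G(X)-\min_{x\in X}\{G(x)\}$ of \eqref{eq.EG}, in each case replacing the information-theoretic tools used for the Kolmogorov version $E_{\max}$ (symmetry of information, Theorem~\ref{theo.metric}, Claim~\ref{claim.metric}) by the defining axioms of a normal compressor from \cite{CV04}, read through the multiset encoding of Definition~\ref{def.GX}. Concretely, I would first record, each up to an additive logarithmic term, the multiset forms of those axioms: monotonicity $G(XY)\ge G(X)$, idempotency $G(\{x,\ldots,x\})=G(x)$, symmetry $G(XY)=G(YX)$, and distributivity $G(XY)+G(Z)\le G(XZ)+G(ZY)$. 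The only delicate point here is that the multiset $XZ$ is re-sorted in length-increasing lexicographic order rather than being the literal concatenation of the strings for $X$ and $Z$; since reordering and separating the constituents costs only logarithmically many bits (the remark following \eqref{eq.nom}), the string axioms of a normal compressor transfer to multisets within the claimed logarithmic precision.

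Given these, \textbf{symmetry} is immediate: $G(X)$ depends on $X$ only through its canonical ordering, and $\min_{x\in X}\{G(x)\}$ is manifestly permutation invariant. \textbf{Positive definiteness} follows from monotonicity and idempotency: monotonicity gives $G(X)\ge G(x)\ge \min_{x\in X}\{G(x)\}$, so $E_{G,\max}(X)\ge 0$; if all elements of $X$ are equal then idempotency makes $G(X)=\min_{x\in X}\{G(x)\}$ up to the logarithmic term, whence $E_{G,\max}(X)=0$ to that precision; and if two elements differ then $G(X)$ strictly exceeds $\min_{x\in X}\{G(x)\}$, yielding a positive value.

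The real work is the \textbf{triangle inequality} $E_{G,\max}(XY)\le E_{G,\max}(XZ)+E_{G,\max}(ZY)$, which I would prove by direct algebra rather than by imitating the more intricate Kolmogorov argument. Write $a=\min_{x\in X}\{G(x)\}$, $b=\min_{x\in Y}\{G(x)\}$, $c=\min_{x\in Z}\{G(x)\}$, so that the three minima appearing in $E_{G,\max}(XY)$, $E_{G,\max}(XZ)$, $E_{G,\max}(ZY)$ are $\min\{a,b\}$, $\min\{a,c\}$, $\min\{c,b\}$ respectively. Distributivity gives $G(XY)\le G(XZ)+G(ZY)-G(Z)$, and monotonicity gives $G(Z)\ge c$, so it suffices to establish the elementary inequality $\min\{a,c\}+\min\{c,b\}\le c+\min\{a,b\}$. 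This is verified by a short case analysis on the ordering of $a,b,c$ (it holds, with equality or strict inequality, in each of the three orderings), and combining it with the two displayed bounds yields the triangle inequality up to the logarithmic additive term.

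I expect the distributivity step to be the main obstacle, for two reasons: it is the one axiom that is a genuine compressor property rather than bookkeeping, and its passage from strings to canonically ordered multisets is exactly where the logarithmic slack is spent. Once distributivity is in hand, the reduction to the min-inequality is routine; and it is worth noting that for $|X|=|Y|=|Z|=1$ the argument specializes to the pairwise metricity of the compression distance already established in \cite{CV04}.
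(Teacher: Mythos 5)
Your proof is correct, and on the triangle inequality it takes a genuinely different---and more complete---route than the paper. The paper's proof establishes only $G(XY)\le G(XZ)+G(ZY)$, via subadditivity $G(XY)\le G(X)+G(Y)$ together with monotonicity, and stops there; it never accounts for the three minimum terms that appear in $E_{G,\max}$. That shortcut does not suffice on its own: inserting the minima naively would require $\min\{a,c\}+\min\{c,b\}\le\min\{a,b\}$ (with $a,b,c$ the minimal compressed lengths over $X,Y,Z$), which fails already for $a=b=c>0$. You instead invoke the distributivity axiom to get the stronger bound $G(XY)\le G(XZ)+G(ZY)-G(Z)$, and then spend the extra term $G(Z)\ge c$ on the correct elementary inequality $\min\{a,c\}+\min\{c,b\}\le c+\min\{a,b\}$, which holds in every ordering of $a,b,c$. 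This is precisely the repair the paper's argument needs, and it parallels how the pairwise case is handled in \cite{CV04}; what the paper's shorter route buys is brevity, at the cost of leaving the minimum terms untreated. Your handling of symmetry, of positive definiteness, and of the logarithmic cost of re-sorting the multiset encoding matches the paper's (equally brief) treatment; the only point to phrase more cautiously is strict positivity when not all elements of $X$ are equal, which for a real compressor holds only up to the same logarithmic additive precision as the other (in)equalities rather than as a literal strict inequality.
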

\begin{proof}
Let $X,Y,Z$ be multisets with at most $m$ members of length at
most $n$.
The positive definiteness and the symmetry property hold clearly
up to an $O(\log G(X)))$ additive term.
Only the triangular inequality is nonobvious. 
For every compressor $G$ we have $G(XY) \leq G(X)+G(Y)$ up to an additive
$O(\log (G(X)+G(Y)))$ term, otherwise we
obtain a better compression by dividing the string to be compressed.
(This also follows from the distributivity property of normal compressors.)
By the monotonicity property $G(X) \leq G(XZ)$ and $G(Y) \leq G(YZ)$ up to an
$O(\log (G(X)+G(Y)))$ or $O(\log (G(Y)+G(Z)))$ 
additive term, respectively. Therefore,
$G(XY) \leq G(XZ)+G(ZY)$ up to an $O(\log (G(X)+G(Y)+G(Z)))$ additive term.
\end{proof}

\section{Normalized Compression Distance for Multisets}\label{sect.ncd}
Let $X$ be a multiset.
The normalized version of 
$e(X)$ using the compressor $G$ based approximation of the normalized
information distance for multisets  \eqref{eq.multiples}, is called the
{\em normalized compression distance} (NCD) for multisets: 
$NCD(X)=0$ for $|X|=0,1$; if $|X| \geq 2$ then
\begin{equation}\label{eq.ncd}
NCD(X) = \max \left\{\frac{G(X)- \min_{x \in X} \{G(x)\}}
{\max_{x \in X}\{G(X\setminus \{x\}\}}, \max_{Y \subset X} \{NCD(Y)\}\right\}.
\end{equation}
This $NCD$
is the main concept of this work. It is the real-world
version of the ideal notion of normalized information distance
NID for multisets in \eqref{eq.multiples}.
As mentioned before, instead of ``distance'' for multisets it 
one can use also the term
``diameter.''  This does not change the acronym NCD.

\begin{remark}
\rm
In practice,
the NCD is a non-negative number
$0 \leq  r \leq 1 + \epsilon$ representing how
different the two files are. Smaller numbers represent more similar files.
The $\epsilon$ in the upper bound is due to
imperfections in our compression techniques,
but for most standard compression algorithms one is unlikely
to see an $\epsilon$ above 0.1 (in our experiments \texttt{gzip}
and \texttt{bzip2} achieved
NCD's above 1, but \texttt{PPMZ} always had NCD at most 1).
\end{remark}
\begin{theorem}
If the compressor is normal, then the NCD for multisets
is a normalized admissible distance and
satisfies the metric (in)equalities up to an ignorable
additive term, that is, it is a similarity metric.
\end{theorem}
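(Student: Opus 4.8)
The plan is to establish the three ingredients of a similarity metric for $NCD$ separately — the range $[0,1]$, admissibility, and the metric (in)equalities — and to do so by transporting the proofs of Theorems~\ref{theo.01} and \ref{theo.nmetric} from the Kolmogorov-complexity setting to the compressor setting. The guiding observation is that the recursive definition \eqref{eq.ncd} of $NCD$ is literally \eqref{eq.multiples} with every $K(\cdot)$ replaced by $G(\cdot)$ and $\max_{x\in X}\{K(X|x)\}$ replaced by its compressor approximation $E_{G,\max}(X)=G(X)-\min_{x\in X}\{G(x)\}$ from \eqref{eq.EG}. Admissibility of the underlying unnormalized distance $E_{G,\max}$ has already been shown in Lemma~\ref{lem.ad}, and $NCD$ is its normalization; so what remains for the phrase ``normalized admissible distance'' is that $NCD$ is computable (immediate, since each $G(\cdot)$ is) and takes values in $[0,1]$.

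For the range I would mimic Theorem~\ref{theo.01} by induction on $n=|X|$. The cases $|X|\le 1$ give $NCD(X)=0$. For $n>1$, if the maximum in \eqref{eq.ncd} is attained by the recursive term $\max_{Y\subset X}\{NCD(Y)\}$ then $|Y|<n$ and the induction hypothesis applies; otherwise $NCD(X)$ equals the left-hand quotient, whose numerator $G(X)-\min_{x\in X}\{G(x)\}=\max_{x\in X}\{G(X)-G(x)\}$ is at most the denominator $\max_{x\in X}\{G(X\setminus\{x\})\}$ up to a logarithmic additive term. This last step uses only the subadditivity $G(X)\le G(X\setminus\{x\})+G(x)$ of a normal compressor, which here plays the role that \eqref{eq.soi} played for the ideal version. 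Non-negativity is immediate, so $0\le NCD(X)\le 1$ up to an ignorable term. Positive definiteness and symmetry then follow directly from \eqref{eq.ncd} together with the approximate invariance of $G$ under reordering of members (normal compressors being symmetric up to small deviations), exactly as in the opening of the proof of Theorem~\ref{theo.nmetric} and as in Lemma~\ref{lem.mg}.

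The substantive part is the triangle inequality $NCD(XY)\le NCD(XZ)+NCD(ZY)$, which I would prove by induction on $|XY|$ following the architecture of Theorem~\ref{theo.nmetric} verbatim. Its engine is the chain \eqref{eq.si}, and I would reproduce it with $G$ in place of $K$: the first inequality needs the compressor analogue of Claim~\ref{claim.tr}, namely $E_{G,\max}(XYZ)\le E_{G,\max}(XZ)+E_{G,\max}(ZY)$ up to a logarithmic term; the second needs the denominator reductions $G(XYZ\setminus\{x\})\ge G(XZ\setminus\{x\})$ and $G(XYZ\setminus\{x\})\ge G(ZY\setminus\{x\})$, which follow from the monotonicity $G(uv)\ge G(u)$ of a normal compressor; and the third is the definition \eqref{eq.ncd}. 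With \eqref{eq.si} in hand, the case split on whether the outer maximum of $NCD(XYZ)$ is realized by a proper submultiset $U$, and the closing appeal to the submultiset monotonicity of Lemma~\ref{lem.triangle}, carry over unchanged.

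The hard part will be the compressor analogue of Claim~\ref{claim.tr}. In the ideal setting it was obtained from the metricity of $E_{\max}$ (Theorem~\ref{theo.metric}, Claim~\ref{claim.metric}) via the substitution $X'=XZ,\ Y'=ZY$ together with a multiset simplification collapsing the duplicated copies of $Z$ (e.g.\ $XZZY\to XYZ$) using symmetry of information and the laws $K(X|x)=K(X'|x)$. For $NCD$ the metricity input is already available as Lemma~\ref{lem.mg}, so the inequality $E_{G,\max}(XZZY)\le E_{G,\max}(XZZ)+E_{G,\max}(ZYZ)$ is free; the delicate point is justifying the collapse of the repeated $Z$ at the level of a real-world compressor, where $K$ and \eqref{eq.soi} are replaced by the distributivity and idempotency axioms of a normal compressor, and where the resulting discrepancies must be shown to be absorbed into the $O((\log K)/K)$ slack after division by the denominator. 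I expect this bookkeeping — rather than any conceptual difficulty — to be where the care is needed.
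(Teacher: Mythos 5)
Your proposal matches the paper's own proof essentially step for step: the paper likewise invokes Lemma~\ref{lem.ad} for admissibility, cites Theorem~\ref{theo.01} ``with $G$ substituted for $K$ throughout'' for the range, derives positive definiteness and symmetry from the idempotency and the length-increasing lexicographic ordering, and proves the triangle inequality by induction on $|XY|$ by rerunning the argument of Theorem~\ref{theo.nmetric} with Lemma~\ref{lem.mg} in place of Theorem~\ref{theo.metric}. The only difference is that you flag the collapse of the duplicated $Z$ in the compressor analogue of Claim~\ref{claim.tr} as the delicate bookkeeping step, which the paper passes over silently; that is a fair and slightly more careful reading of the same argument, not a different one.
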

\begin{proof}
The NCD \eqref{eq.ncd} is a normalized admissible distance by
Lemma~\ref{lem.ad}. It is normalized to $[0,1]$ up to an additive term
of $O((\log G)/G)$ with $G=G(X)$ as we can see from
the formula \eqref{eq.ncd} and Theorem~\ref{theo.01} 
with $G$ substituted for $K$
throughout. We next show it is a metric. 

We must have that $NCD(X)= 0$ up to negligible error for 
a  normal compressor $G$ if
$X$ consists of equal members. The idempotency property of a normal
compressor is up to an additive term of $O(\log G(X))$. Hence
the positive definiteness of $NCD(X)$ is satisfied up to an additive
term of $O((\log G(X))/G(X))$.
The order of the members of $X$ is assumed to be length-increasing 
lexicographic. Therefore it is symmetric up to an additive term
of $O((\log G(X))/G(X))$. It remains to show the triangle
inequality $NCD(XY) \leq NCD(XZ)+NCD(ZY)$ up to an additive term
of $O((\log G)/G)$ where $G=G(X)+G(Y)+G(Z)$. 
We do this by induction on $n=|XY|$ where $X,Y$ are 
multisets. 

{\em Base case:} $n=0,1$. The triangle property is vacuously satisfied. 

{\em Induction:} $n>1$. Assume the triangle property is satisfied 
for the cases $1, \ldots,  n-1$.
We prove it for $|XY|=n$.
If $NCD(XY)=NCD(Z)$ for some $Z \subset XY$ then 
the case follows from the inductive argument. Therefore, $NCD(XY)$
is the first term in the outer maximization of \eqref{eq.ncd}.
Write $G(XY|x_{XY})=G(XY)- \min_{x \in XY} \{G(x) \}$
and $G(XY \setminus \{x_{xy}\})= \max_{x \in XY}\{G(XY)\setminus \{x\}\}$
and similar for $XZ,YZ,XYZ$. Following the induction case of
the triangle inequality  in the proof of
Theorem~\ref{theo.nmetric}, using Lemma~\ref{lem.mg} for the metricity
of $E_{G,\max}$ wherever Theorem~\ref{theo.metric} is used to 
assert the metricity
of $E_{\max}$, and substitute $G$ for $K$ in the remainder.
This completes the proof. That is,
for every multiset $Z$ we have
\[
NCD(XY) \leq NCD(XZ)+NCD(ZY),
\]
up to an additive term of $O((\log G) /G)$.
\end{proof}

For $|XY|=2$ the triangle property is also proved in \cite{CV04}. The proof
above is simpler and more elementary.

\section{Computing the Normalized Compression Distance for Multisets}
\label{sect.comp}
Define
\begin{equation}\label{eq.ncd1}
NCD_1(X) = \frac{G(X)- \min_{x \in X} \{G(x)\}}
{\max_{x \in X}\{G(X\setminus \{x\}\}},
\end{equation}
the first term of \eqref{eq.ncd}.
Assume we want to compute
$NCD(X)$ and $|X|=n \geq 2$.
In practice it seems that one can do no better than the 
following (initialized with $M_i=0$ for $i \geq 1$):

{\bf for} $i=2,\ldots, n$ 

{\bf do}
$M_i := \max\{\max_{Y}\{NCD_1(Y): Y\subset X, \: |Y| = i\}, M_{i-1}\}$ using 
\eqref{eq.ncd1} {\bf od}

$NCD(X) := M_n$

However, this process involves evaluating the $NCD$'s of the
entire powerset of $X$ requiring at least order $2^n$ time. 
\begin{theorem}\label{theo.fast}
Let $X$ be a multiset and $n=|X|$.
There is a heuristic algorithm to approximate $NCD(X)$ from below
in $O(n^2)$ computations of $G(Y)$ with $Y \subseteq X$. (Assuming 
every $x \in Y$ to be a binary string, $|x|\leq m$, and  $G$ compresses 
in linear time then $G(Y)$ is computed in $O(nm)$ time.) 
\end{theorem}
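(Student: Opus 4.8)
The plan is to exploit the fact that the recursion in \eqref{eq.ncd} collapses: unfolding the nested maximization over proper submultisets (using $NCD(Y)=0$ for $|Y|\leq 1$) shows by induction that
\[
NCD(X) = \max_{\emptyset \neq Y \subseteq X,\; |Y|\geq 2} NCD_1(Y),
\]
so computing $NCD(X)$ amounts to searching the powerset of $X$ for the submultiset maximizing $NCD_1$. Consequently any family $\mathcal{F}$ of submultisets with $X\in\mathcal{F}$ yields $\max_{Y\in\mathcal{F}} NCD_1(Y) \leq NCD(X)$, which is automatically a lower bound. The strategy is therefore to select, by a greedy descent, a single nested chain of submultisets $X=Y_n \supset Y_{n-1} \supset \cdots \supset Y_2$ and return the maximum of $NCD_1$ over that chain; this is a valid \emph{from-below} approximation regardless of the greedy rule, so the entire content of the theorem reduces to bounding the number of compressions and arranging them so that each is reused.

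First I would precompute the $n$ singleton lengths $G(x)$ for $x\in X$; these supply $\min_{x\in Y}\{G(x)\}$ for every $Y$ encountered. Then I would descend level by level, maintaining the current submultiset $Y_k$ of size $k$, starting from $Y_n=X$ with $G(X)$ computed once. At each level I compute the $k$ leave-one-out lengths $\{G(Y_k\setminus\{x\}):x\in Y_k\}$. These serve three purposes simultaneously: their maximum is the denominator $\max_{x}\{G(Y_k\setminus\{x\})\}$ of $NCD_1(Y_k)$; together with the precomputed singleton minimum and the already-known $G(Y_k)$ they yield $NCD_1(Y_k)$; and the element $x^\ast$ attaining that maximum is the most redundant element (the one whose removal least decreases the compressed size, i.e.\ smallest conditional information relative to the rest), which I delete to form $Y_{k-1}=Y_k\setminus\{x^\ast\}$. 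The crucial bookkeeping point is that $G(Y_{k-1})$ was already computed as one of the leave-one-out lengths at level $k$, so the numerator at the next level costs nothing.

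The complexity count then follows: level $k$ uses exactly $k$ evaluations of $G$, the single top value $G(X)$ uses one, and the singletons use $n$, for a total of $1+n+\sum_{k=2}^{n}k = O(n^2)$ computations of $G(Y)$ with $Y\subseteq X$. Since each such $Y$ is a string of at most $nm$ bits and $G$ compresses in linear time, each evaluation is $O(nm)$, as claimed. Maintaining the running maximum $M_k$ of $NCD_1$ over the chain, exactly as in the displayed loop but with $Y$ restricted to the single greedily chosen submultiset of each size rather than all of them, then outputs a quantity $M_n \leq NCD(X)$.

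The main obstacle is conceptual rather than computational: the theorem only claims approximation \emph{from below}, and I would emphasize that this lower-bound guarantee is unconditional, holding for any chain, so no correctness argument about the greedy choice is required. What the greedy rule buys is practical tightness, not provable optimality. The genuinely load-bearing steps are the collapse of the recursion to $\max_Y NCD_1(Y)$ and the observation that the leave-one-out lengths computed for each denominator double as both the selection signal and the numerator for the next level, which is precisely what keeps the total at $O(n^2)$ rather than $O(n^3)$ or worse.
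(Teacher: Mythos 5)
Your proposal is correct and follows essentially the same route as the paper: a greedy descent through a single nested chain $X = Y_0 \supset Y_1 \supset \cdots \supset Y_{n-2}$, at each step deleting the element $x$ that maximizes $G(Y_k \setminus \{x\})$ (the most redundant element), and returning $\max_k NCD_1(Y_k)$ as an unconditional lower bound on $NCD(X)$ with the same $O(n^2)$ count of compressions. Your added observations---that the recursion in \eqref{eq.ncd} collapses to $\max_{Y \subseteq X,\,|Y|\ge 2} NCD_1(Y)$, and that the leave-one-out lengths at level $k$ double as the numerator $G(Y_{k-1})$ at the next level---are consistent with, and slightly sharpen, the paper's presentation.
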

\begin{proof}
We use the analysis in Remark~\ref{rem.defp} and in particular
the inequality
\eqref{eq.ee}. We ignore logarithmic additive terms.
We approximate $NCD(X)$ from below by
$\max_{Y \subseteq X} \{NCD_1(Y)\}$ for a sequence of $n-2$ properly
nested $Y$'s of decreasing cardinality. 
That is, in the computation we set the value of $NCD(X)$ to 
$NCD_1(X)$ unless there is one of these $Y$'s such that
$NCD_1(X) < NCD_1(Y)$ in which case we set the value of $NCD(X)$ to $NCD_1(Y)$.
How do we choose this sequence of $Y$'s?
\begin{claim}\label{claim.1}
Let $Y \subset X$ and 
$G(X) - \min_{x\in X}\{G(x)\}   - \max_{x\in X}\{G(X \setminus \{x\})\}
< G(Y) - \min_{x\in Y}\{G(x)\} -\max_{x\in Y}\{G(Y \setminus \{x\})\}$. 
Then, $NCD_1(X) < NCD_1(Y)$. 
\end{claim}
\begin{proof}
We first show that
$\max_{x\in Y}\{G(Y \setminus \{x\})\} 
\leq \max_{x\in X}\{G(X \setminus \{x\})\}$. 
Let $G(Y \setminus \{y\}) = \max_{x\in Y}\{G(Y \setminus \{x\})\}$. Since
$Y \subset X$ we have 
$G(Y \setminus \{y\}) \leq G(X \setminus \{y\}) 
\leq \max_{x\in X}\{G(X \setminus \{x\})\}$.

We next show that if $a-b < c-d$ and $d \leq b$ then $a/b < c/d$. 
Namely, dividing the first inequality by $b$ we obtain 
$a/b -b/b < (c - d)/b \leq (c-d)/d$.
Hence, $a/b < c/d$.

Setting $a=G(X) - \min_{x\in X}\{G(x)\}$, 
$b=\max_{x\in X}\{G(X \setminus \{x\})\}$, 
$c=G(Y)-\min_{x\in Y}\{G(x)\}$,
and $d=\max_{x\in Y}\{G(Y \setminus \{x\})\}$, 
the above shows that the claim holds.
\end{proof}

Claim~\ref{claim.1} states that the only candidates $Y$ ($Y \subset X$) for
$NCD_1(Y) > NCD_1(X)$ are the $Y$
such that $G(X)- \min_{x\in X}\{G(x)\} - \max_{x\in X}\{G(X \setminus \{x\})\} 
< G(Y) - \min_{x\in Y}\{G(x)\} -
\max_{x\in Y}\{G(Y \setminus \{x\})\}$. 

For example,
let $X=\{x_1,x_2, \ldots, x_n\}$, $|Y|=2$, 
$G(X)= \max_{x\in X}\{G(X \setminus \{x\})\}$
(for instance $x_1=x_2$), and $\min_{x\in X}\{G(x)\} > 0$. Clearly,
$G(Y) - \max_{x\in Y}\{G(Y \setminus \{x\})\}
= G(Y)- \max_{x \in Y}\{G(x)\}
=\min_{x \in Y}\{G(x)\}$. Then, $0=G(X)- \max_{x\in X}\{G(X \setminus \{x\})\} 
< G(Y) - \max_{x\in Y}\{G(Y \setminus \{x\})\} +\min_{x\in X}\{G(x)\} -
\min_{x\in Y}\{G(x)\} = \min_{x \in Y}\{G(x)\} +\min_{x\in X}\{G(x)\} -
\min_{x\in Y}\{G(x)\} = \min_{x\in X}\{G(x)\}$. 

Hence for $Y \subset X$, if 
$G(X)-\max_{x\in X}\{G(X \setminus \{x\})\}$ is smaller than
$G(Y)-G(\max_{x\in Y}\{G(Y \setminus \{x\})\} +\min_{x\in X}\{G(x)\} -
\min_{x\in Y}\{G(x)\}$ then $NCD_1(Y) > NCD_1(X)$. Note that
if the $x$ that maximizes $\max_{x\in X}\{G(X \setminus \{x\})\}$ 
is not the $x$ that minimizes  $\min_{x\in X}\{G(x)\}$
then $\min_{x\in X}\{G(x)\} - \min_{x\in Y}\{G(x)\} =0$, otherwise
$\min_{x\in X}\{G(x)\} - \min_{x\in Y}\{G(x)\} < 0$. 

Removing the element that minimizes 
$G(X)-\max_{x\in X}\{G(X \setminus \{x\})\}$ may make
the elements of $Y$ more dissimilar and therefore 
increase $G(Y)-G(\max_{x\in Y}\{G(Y \setminus \{x\})\}$.
Iterating this process may make the elements of the resulting sets ever
more dissimilar, until the associated $NCD_1$ declines due to
decreasing cardinality.   

Therefore, we come to the following heuristic.
Let $X=\{x_1, \ldots , x_n\}$. Compute 
$$G(X) - \max_{x\in X}\{G(X \setminus \{x\})\}.$$
Let $I$ be the index $i$ for which the maximum in the second term is reached. 
Set $Y_1=X \setminus \{x_I\}$. Repeat this process with $Y_1$ instead of $X$ to
obtain $Y_2$, and so on. The result is $Y_{n-2} \subset \ldots \subset Y_1 
\subset Y_0$ with $Y_0=X$ and $|Y_{n-2}|=2$. Set
$NCD(X)=\max_{0 \leq i \leq n-2} \{NCD_1(Y_i) \}$.
The whole process to compute this heuristic to approximate
$NCD(X)$ from below takes $O(n^2)$ steps where a step involves 
compressing a subset of $X$ in $O(nm)$ time. 
\end{proof}

The inequality $NCD_1(Y) > NCD_1(X)$ for $Y \subset X$ in the form of $e_1(Y) > e_1(X)$ with
$|Y|=2$ occurs in the example of Remark~\ref{rem.defp}.
Here $e_1$ is according to \eqref{eq.defp}, that is, the left term in
\eqref{eq.multiples} defining $e(X)$.

\section{How To Apply The NCD}\label{sect.ancd}

The applications in Section~\ref{sect.app} concern classifications.
For these purposes, it makes no sense to compute the $NCD$, but instead 
we consider the change in $NCD_1$ for each multiset that we are classifying against 
with and without the element that is being classified. To compare these changes we
require as much discriminatory power as possible. 

\subsection{Theory}
Remark~\ref{rem.ei} shows that $NCD_1(X) \rightarrow 1$ for 
$G(X) \rightarrow \infty$
and $\min_{x \in X} \{G(x)\}/G(X) \rightarrow 0$ (in the form of $e_1(X)$).
While possibly $NCD_1(X) < NCD(X)$, 
for some $X' \supset X$ we have $NCD_1(X')=NCD(X')$. 
In general we believe that the nondecreasing
of $NCD(X)$ with larger $X$ according to Lemma~\ref{lem.triangle} is just required
to make the $NCD$ metric. This smoothes the $NCD_1$ function according to \eqref{eq.ncd1}
to a nondecreasing function for increasing cardinality arguments. 
However, in the smoothing
one obliterates differences that obviously enhance discriminatory power.
%Worries that by going from $NCD(X)$ to $NCD_1(X)$ 
%one looses metricity are partially unfounded.
%By the proof of Theorem~\ref{theo.nmetric}, 
%the $e_1$ distance between multisets of equal cardinality is a metric, and therefore
%the $NCD_1$ distance between multisets of equal cardinality is a metric.

\subsection{Practice}
Suppose we want to classify $x$ as 
belonging to one of the classes represented by multisets $A,B, \ldots, Z$.
Our method is to consider $NCD(A \bigcup \{x\})-NCD(A)$, and similar for classes
represented by $B, \ldots , Z$, and then to select the least difference. 
However, this difference is always greater or equal to 0 by Lemma~\ref{lem.triangle}.
If we look at $NCD_1(A \bigcup \{x\})-NCD_1(A)$ then the difference may be negative,
zero, or positive and possibly greater in absolute value. This gives larger
discriminatory power in the classes selection.

Another reason is as follows.
Let $Y_0$ be as in the proof of Theorem~\ref{theo.fast}.
For the handwritten digit recognition application in Section~\ref{sect.NIST} we 
computed $Y_0$ for digits $1,2, \ldots ,9,0$. The values were 0.9845, 0.9681, 0.9911,
0.9863, 0.9814, 0.9939, 0.9942, 0.9951,0.992, 0.9796. But
$\max_{0 \leq i \leq n-2} \{NCD_1(Y_i) \} =0.9953$ for the class of digit 1 where the
maximum is reached for index $i=21$.
Thus $NCD(A \bigcup \{x\})-NCD_1(A \bigcup \{x\}) = 0.0108$ for this class with the $NCD$
computed according to Theorem~\ref{theo.fast}. We know that $NCD(A) \leq NCD(A \bigcup \{x\})$ 
because $A \subset A \bigcup \{x\}$. Computing $NCD(A)$ similarly as 
$NCD(A \bigcup \{x\})-NCD_1(A \bigcup \{x\})$
may yield $NCD(A) = NCD(A \bigcup \{x\})$ because 
$Y_{j}$ for $A \bigcup \{x\}$ 
may be the same multiset as $Y_{j-1}$ for $A$ for some $1 \leq j \leq 21$. 
This has nothing to do with the element $x$
we try to classify. The same may happen to $NCD(B \bigcup \{x\})-NCD(B)$, and so on.

\subsection{Kolmogorov Complexity of Natural Data}
The Kolmogorov
complexity of a file is a lower bound on the length
of the ultimate compressed version of that
file.
Above we approximate the Kolmogorov complexities involved by
a real-world compressor $G$. Since the Kolmogorov complexity is incomputable,
in the approximation we never know how close we are to it. However,
we assume that the natural data we are dealing with contain no complicated
 mathematical
constructs like $\pi=3.1415 \ldots$ or Universal Turing machines.
In fact, we assume that the natural data we are dealing with contains
only effective regularities that a good compressor like $G$ finds. Under those
assumptions the Kolmogorov complexity $K(x)$ of object $x$ is not much smaller than
the length of the compressed version $G(x)$ of the object.

\subsection{Partition Algorithm}
Section \ref{sect.NIST} describes an algorithm that we developed to 
partition data for classification in cases where the classes are not 
well separated so that there are no subsets of a class with separation 
larger than that of the smallest inter-class separation, 
a heuristic that we have found to work well in practice. 

\section{Applications}\label{sect.app}

We detail preliminary results using the new NCD for multisets.
The NCD for pairs as originally defined \cite{CV04} has been applied in a wide 
range of application domains. In \cite{Ke04} a close relative was compared to 
every time series distance measure published in the decade preceding 2004 
from all of the major data analysis conferences
 and found to outperform all other distances aside from the Euclidean distance 
with which it was competitive. The NCD for pairs has also been applied in 
biological applications to analyze the results of segmentation 
and tracking of proliferating cells and organelles \cite{Co09,Co10,Wi12}. 
The NCD is unique in allowing multidimensional
time sequence data to be compared directly, with no need for alignment or averaging. 

Here, we compare the performance of the proposed NCD for multisets 
to that of 
a previous application of the NCD for pairs for predicting retinal progenitor cell
(RPC) fate outcomes from the segmentation and tracking results from live 
cell imaging \cite{Co10}. We also apply the proposed NCD to a synthetic data 
set previously analyzed with the pairwise NCD \cite{Co09}. Finally, we apply the proposed NCD 
for multisets to the classification of handwritten digits, an application
that was previously evaluated using the pairwise NCD in \cite{CV04}.

\subsection{Retinal Progenitor Cell Fate Prediction}\label{sect.rpc}
In \cite{Co10}, long-term time-lapse image sequences showing rat RPCs were 
analyzed using automated segmentation and tracking algorithms. 
Images were captured every five minutes of the RPCs for a period of 9--13 days. 
Up to 100 image sequences may be captured simultaneously in this manner using 
a microscope with a mechanized stage. For an example see Figure~\ref{fig.rpc}.
\begin{figure}[htb]
\begin{center}
\includegraphics[width=4.5in]{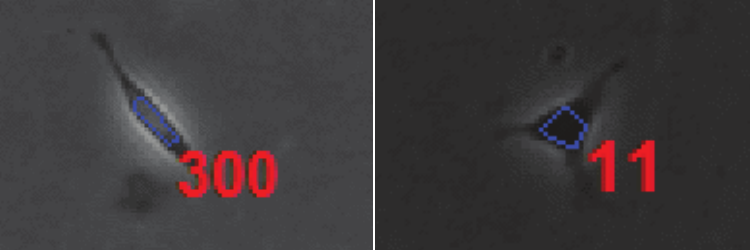}
\end{center}
\caption{Example frames from two retinal progenitor cell 
(RPC) image sequences showing segmentation (blue lines) and tracking 
(red lines) results. The type of cells the RPCs will eventually produce 
can be predicted by analyzing the multidimensional time sequence data 
obtained from the segmentation and tracking results. 
The NCD for multisets significantly improves the accuracy of the predictions.}
\label{fig.rpc}
\end{figure}
At the conclusion of the experiment, 
the ``fate'' of the offspring produced by each RPC was determined using 
a combination of cell morphology and specific cell-type fluorescent markers 
for the four different retinal cell types produced from embryonic day 20 rat 
RPCs \cite{Ca03}. At the conclusion of the imaging, automated segmentation and 
tracking algorithms \cite{Wi11} were applied to extract the time course of 
features for each cell. These automated segmentation and tracking algorithms 
extract a time course of feature data for each stem cell at a five-minute 
temporal resolution, showing the patterns of cellular motion and morphology 
over the lifetime of the cell. Specifically, the segmentation and tracking results
consisted of a 6-dimensional time sequence feature vector incorporating 
two-dimensional motion $(\Delta x, \Delta y)$, as well as the direction of motion, 
total distance travelled, cellular size or area (in pixels) and a measure of 
eccentricity on $[0,1]$ (0 being linear, 1 being circular shape). 
The time sequence feature vectors for each of the cells are of different 
length and are not aligned. The results from the segmentation and 
tracking algorithms were then analyzed as follows. 

The original analysis of the RPC segmentation and tracking results used a 
multiresolution semi-supervised spectral analysis based on the originally 
formulated pairwise NCD. An ensemble of distance matrices consisting of pairwise 
NCDs between quantized time sequence feature vectors of individual cells 
is generated for different feature subsets $f$ and different numbers of 
quantization symbols $n$ for the numerical time sequence data. The fully automatic
quantization of the numeric time sequence data is described in \cite{Co09}. All 
subsets of the 6-dimensional feature vector were included, although it is 
possible to use non-exhaustive feature subset selection methods such as forward 
floating search, as described in \cite{Co09}. Each distance matrix is then 
normalized as described in \cite{Co10}, and the eigenvectors and eigenvalues of 
the normalized matrix are computed. These eigenvectors are stacked and ordered 
by the magnitude of the corresponding eigenvalues to form the columns of a new 
``spectral'' matrix. The spectral matrix is a square matrix, of the same dimension
$N$ as the number of stem cells being analyzed. The spectral matrix has the 
important property that the $i$th row of the matrix is a point in 
${\mathbb R}^N$ (${\mathbb R}$ is the set of real numbers) 
that corresponds to the 
quantized feature vectors for the $i$th stem cell. If we consider only the 
first $k$ columns, giving a spectral matrix of dimension $N\times k$, and run a 
K-Means clustering algorithm, this yields the well-known spectral
K-Means algorithm \cite{Ka03}. If we have known outcomes for any of the objects 
that were compared using the pairwise NCD, then we can formulate a semi-supervised
spectral learning algorithm by running for example nearest neighbors or decision 
tree classifiers on the rows of the spectral matrix. This was the approach adopted
in \cite{Co10}. 

In the original analysis, three different sets of known outcomes were considered. 
First, a group of 72 cells were analyzed to identify cells that would self-renew 
(19 cells), producing additional progenitors and cells that would terminally 
differentiate (53 cells), producing two retinal neurons. Next, a group of 86 cells
were considered on the question of whether they would produce two photoreceptor 
neurons after division (52 cells), or whether they would produce some other 
combination of retinal neurons (34 cells). Finally, 78 cells were analyzed to 
determine the specific combination of retinal neurons they would produce, 
including 52 cells that produce two photoreceptor neurons, 10 cells that produce 
a photoreceptor and bipolar neuron, and 16 cells that produced a photoreceptor 
neuron and an amacrine cell. Confidence intervals are computed for the classification results by treating the classification accuracy as a normally distributed random variable, and using the sample size of the classifier together with the normal cumulative distribution function (CDF) to estimate the region corresponding to a fixed percentage of the distribution \cite[pp. 147-149]{Witten2005}. For the terminal versus self-renewing question, 
99\% accuracy was achieved in prediction using a spectral nearest neighbor 
classifier, with a 95\% confidence interval of [0.93, 1.0]. In the sequel, we will list the 95\% confidence interval in square brackets following each reported classification accuracy. For the two photoreceptor versus other combination question, 87\% 
accuracy [0.78, 0.93] was achieved using a spectral decision tree classifier. 
Finally, for the specific combination of retinal neurons 83\% accuracy [0.73, 0.9] was
achieved also using a spectral decision tree classifier.

Classification using the newly proposed NCD \eqref{eq.multiples} is much more 
straightforward and leads to significantly better results. 
Given multisets $A$ and $B$, each consisting of cells having a given fate, 
and a cell $x$ with unknown fate, we proceed as follows. We assign $x$ to 
whichever multiset has its distance (more picturesque ``diameter'')
 increased the least with the addition of 
$x$. In other words, if 
\begin{equation}\label{eq.class}
NCD_1(Ax)-NCD_1(A)<NCD_1(Bx)-NCD_1(B),
\end{equation}
we assign $x$ to multiset $A$, else we assign $x$ to multiset $B$. (The
notation $Xx$ is shorthand for the multiset $X$ with one occurrence of $x$ added.) Note that for classification purposes we consider the impact of element $x$ on the $NCD_1$ \eqref{eq.ncd1}  only and do not evaluate the full NCD for classification. We use the $NCD_1$ in \eqref{eq.class} rather than the $NCD$ because the $NCD_1$ has the ability to decrease when element $x$ contains redundant information with respect to multiset $A$. See also the reasons
in Section~\ref{sect.ancd}. 

The classification accuracy improved considerably using the newly proposed NCD
for multisets. For the terminal versus self-renewing question, we achieved 
100\% accuracy in prediction [0.95,1.0] compared to 99\% accuracy [0.93,1.0] for the multiresolution 
spectral pairwise NCD. For the two photoreceptor versus other combination 
question, we also achieved 100\% accuracy [0.95,1.0] compared to 87\% [0.78,0.93]. Finally, for the 
specific combination of retinal neurons we achieved 92\% accuracy [0.84,0.96] compared to 
83\% [0.73,0.9] with the previous method.

\subsection{Synthetic  Data}\label{sect.synth}
In \cite{Co09}, an approach was developed that used the pairwise NCD
to compute a concise and meaningful summarization of the results 
of automated segmentation and tracking algorithms applied to biological image sequence
 data obtained from live cell and tissue microscopy.
A synthetic or simulated data set was analyzed
using a method that incorporated the pairwise NCD.
allowing precise control over differences between objects
within and across image sequences. The features for the
synthetic data set consisted of a
23-dimensional feature vector. The seven features relating
to 3-D cell motion and growth were modeled as described
below, the remaining 16 features were set to random values.
Cell motility was based on a ???run-and-tumble???
model similar to the motion of bacteria. This
consists of periods of rapid directed movement followed by
a period of random undirected motion. Cell lifespan was modeled as a gamma
distributed random variable with shape parameter 50 and scale parameter 10. 
Once a cell reaches its
lifespan it undergoes cell division, producing two new
cells, or, if a predetermined population limit has been
reached, the cell undergoes apoptosis, or dies. The final aspect of the model
was cell size. The initial cell radius,
denoted ${r}_{0}$, is a gamma-distributed random variable with shape 
parameter 200 and scale parameter 0.05. The
cells growth rate is labeled $\upsilon$. At the end of its lifespan, the
cell doubles its radius. The radius at time t is given by
\[r(t)={{r}_{0}}+{{r}_{0}}\cdot {{\left( \frac{t-{{t}_{0}}}{lifespan} \right)}^{\upsilon }}\]
In the original analysis, two different populations were  simulated, one population having an
$\upsilon$  value of  3,  the second having  an $\upsilon$  value of 0.9. 

The data was originally analyzed using a multiresolution representation of the time sequence data along with 
feature subset selection. Here we repeat the analysis for a population of 656 simulated cells, with between
228 and 280 time values for each 23 dimensional feature vector.
This data was analyzed using a minimum distance supervised classifier with both the original pairwise and 
the proposed NCD for  multisets. Omitting the feature subset selection step and incorporating the entire 23   
dimensional feature vector,  the pairwise NCD was 57\% correct [0.53,0.61] the classifying the data, 
measured by  leave-one-out cross validation. Using  NCD  for multisets, we achieved 91\% correct [0.89,.93] classification,
a significant improvement.
When  a feature subset selection step was included, both approaches achieved 100\% correct classification.

\subsection{Axonal Organelle Transport}\label{sect.aot}
Deficiencies in the transport of organelles along the neuronal axon have been shown to play an early and possibly causative role in neurodegenerative diseases including Huntington's disease \cite{Gau04}. In \cite{Wi12}, we analyzed time lapse image sequences showing the transport of fluorescently labeled Brain Derived Neurotrophic Factor (BDNF) organelles in a wild-type (healthy) population of mice as well as in a mutant huntingtin protein population. The goal of this study was to examine the relationship between BDNF transport and Huntington's disease.  The transport of the fluorescently labeled BDNF organelles was analyzed using a newly developed multi-target tracking approach we termed "Multitemporal Association Tracking" (MAT). In each image sequence, organelles were segmented and then tracked using MAT and instantaneous velocities were calculated for all tracks. 

Image data was collected over eight time-lapse experiments, with each experiment containing two sets of simultaneously captured image sequences, one for the diseased population and one for the wild type population. There were a total of 88 movies from eight data sets. Although the pairwise NCD was not able to accurately differentiate these populations for individual image sequences, by aggregating the image sequences so that all velocity data from a single experiment and population were considered together, we were able to correctly classify six of the eight experiments as wild type versus diseased for 75\% correct classification accuracy. Analyzing the velocity data from the individual image sequences using pairwise NCD with a minimum distance classifier, we were able to classify 57\% [0.47,0.67] of the image sequences correctly into wild type versus diseased populations. Using the NCD for multisets formulation described in \eqref{eq.class} with the same minimum distance approach, as described in the previous sections, we achieved a classification accuracy of 97\% [0.91,0.99].

\subsection{NIST handwritten digits}\label{sect.NIST}

In addition to the previous applications, we applied the new NCD  for 
multisets to analyzing handwritten digits 
from the MNIST handwritten digits database \cite{Lec98}, 
 a free and publicly available version of the  NIST handwritten digits database 19 that was analyzed in \cite{CV04}.
 The NIST data consists of 128x128 binary images while 
the MNIST data has been normalized
to a 28x28 grayscale (0,..,255) images. 
The MNIST database contains a total of 70,000  handwritten digits. 
Here, we consider only the first 1000 digits. 
The images are first scaled by a factor of four and then adaptive thresholded using an Otsu transform
to form a binary image. The  images  are next converted to one-dimensional
 streams of binary digits  and used to form a pairwise distance matrix between each of the 1000 digits. Originally the input looks as Figure~\ref{fig.digits}.
\begin{figure}[htb]
\begin{center}
\includegraphics[width=3.5in]{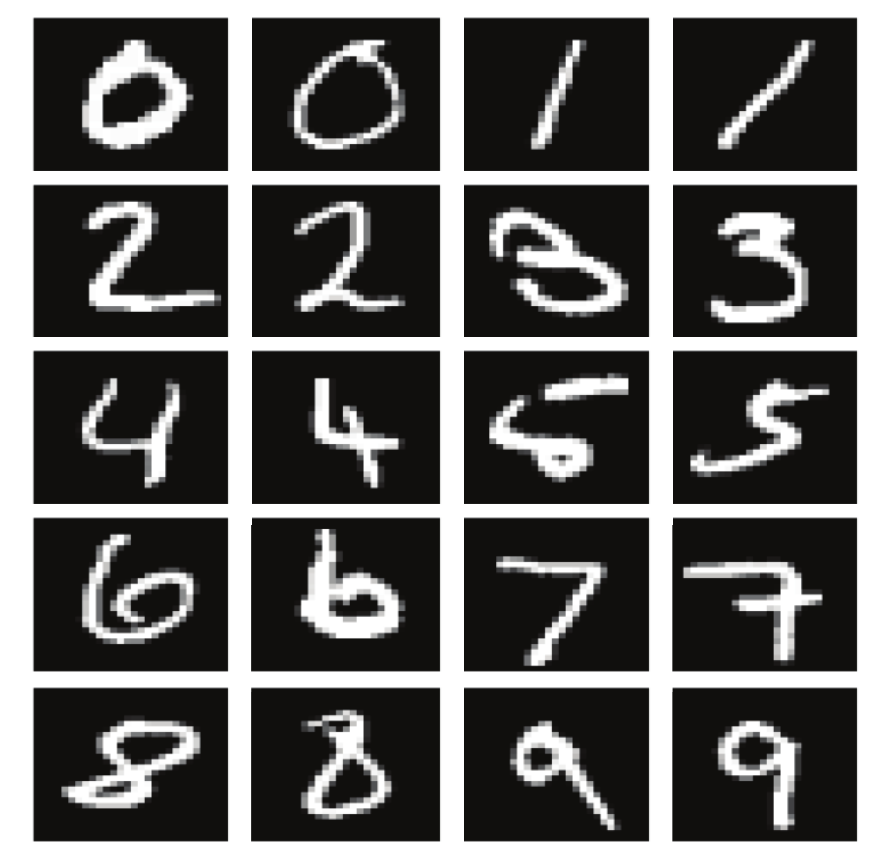}
\end{center}
\caption{Example MNIST digits. Classification accuracy for this application was improved by combining the proposed NCD for multisets with the pairwise NCD.}
\label{fig.digits}
\end{figure}

 Following the same approach as described for the retinal progenitor cells above, 
 we form a spectral matrix from this pairwise distance matrix.  In \cite{CV04}, a novel approach was developed for
 using the distances as input to a support vector machine. Random data examples along with unlabelled images of the same size
were selected and used as training data, achieving a classification accuracy of 87\% on the unscaled NIST database 19 digits. 
 We follow the same approach of incorporating
the distances into a supervised learning framework, using our
 spectral matrix as input to an ensemble of discriminant (Gaussian mixture model) classifiers \cite{Guo07}.
 Using leave-one-out cross validation,  this approach  using the pairwise 
NCD achieved 82\% correct classification [0.79,0.84] for the 1000 scaled and 
resized MNIST digits.

In applying the multisets NCD to this data, we  measured the separation between 
classes or the {\em margin}. Given multisets $A$ and $B$, each corresponding to 
a class in the testing data, we measure the separation between the two classes 
as 
\begin{equation}\label{eq.2}
NCD_1(AB)-NCD_1(A)-NCD_1(B).
\end{equation}
This follows directly from the relevant Venn diagram. Our goal is to 
partition the input classes such that the separation between classes is larger than any separation 
between subsets of the same class, subject to a minimum class size. We have found that this approach works well in practice. We have developed an  
expectation maximization algorithm to partition the classes such that there exist 
no subsets of a class separated by a margin larger than the minimum separation 
between classes. 

Our expectation maximization algorithm attempts to partition the classes 
into maximally separated subsets as measured by \eqref{eq.2}. This algorithm, 
that we have termed {\em K-Lists}, is modeled after the K-means algorithm. 
Although it is suitable for general clustering, here we use it to partition 
the data into two maximally separated subsets. The algorithm is detailed 
in Figure~\ref{tab.1}. There is one important difference between proposed 
K-Lists algorithm and the K-Means algorithm. Because we are not using the centroid
of a cluster as a representative value as in K-Means, but rather the subset 
itself via the NCD for multisets, we only allow a single element to change 
subsets at every iteration. This prevents thrashing where groups of elements 
chase each other back and forth between the two subsets. the algorithm is run until it 
either can not find  any partitions in the data that are separated by more than the
 maximal inter-class separation, or until it encounters a  specified minimum
  cluster size. This step is 
computationally demanding, but it is an inherently parallel computation.

For the retinal progenitor cell data  and synthetic data sets described in the previous sections, 
the K-Lists partitioning algorithm was not able to find any subsets that had a larger separation 
as measured by \eqref{eq.2} compared to the separation between the classes. For the NIST handwritten digits data, 
the partitioning algorithm was consistently able to find subsets with 
separation larger than the between class separation.  The partitioning was run
for a range of different minimum cluster sizes 
(10\%, 20\% and 30\% of the original class 
size). This results in multiple distances to each original digit class. Here we included the two minimum distances to each class as input to the ensemble of discriminant classifiers. This resulted in a classification accuracy of for the 30\% partition size of 85\% [0.83,0.87]. The other two partition sizes had marginally lower classification accuracy.  Finally, we combined the two minimal class distances from the partitioned multisets data along with the pairwise  spectral distances described above as input to the classification algorithm, 
resulting in a combined leave-one-out cross validation 
accuracy of 99.6\% correct [0.990,0.998], a significant improvement over the accuracy achieved using either the pairwise or multisets NCD alone. The current state of the art classifier for the MNIST data achieves an accuracy of 99.77\% correct \cite{Schmid}.

\subsection{Data, Software, Machines}\label{sect.means}
All of the software and the time sequence data for the RPC fate outcome problem 
can be downloaded from http://bioimage.coe.drexel.edu/NCDM. 
The software is implemented in C and uses MPI for parallelization. All data compression 
was done with \texttt{bzip2} using the default settings. Data import is handled by a 
MATLAB script that is also provided. The software 
has been run on a small cluster, consisting of 100 (hyperthreaded) 
Xeon and i7 cores running at 2.9 Ghz. The RPC and synthetic classification runs in approximately
20 minutes for each question, while the partitioning and classification of the 
NIST digit data takes multiple hours for each step. 

\begin{figure}
\begin{enumerate}
\item (Initialize) Pick two elements (seeds) of $X$ at random, assigning one 
element to each $A$ and $B$. For each remaining element $x$, assign $x$ to the 
closer one of $A$ or $B$ using pairwise NCD to the random seeds
\item
For each element $x$, compute the distance from $x$ to class $A$ and $B$
using \eqref{eq.class} and assign to whichever class achieves the smaller distance.
\item
Choose the single element that wants to change subsets, e.g. from $A$ to $B$ 
or vice versa and whose change maximizes $NCD_1(AB)-NCD_1(A)-NCD_1(B)$ and swap 
that element from $A$ to $B$ or vice versa. 
\item
Repeat steps 2 and 3 until no more elements want to change subsets or until 
we exceed e.g. 100 iterations.
\end{enumerate}
Repeat the whole process some fixed number of times (here we use 5) for each $X$ 
and choose the subsets that achieve the maximum of $NCD_1(AB)-NCD_1(A)-NCD_1(B)$. If that 
value exceeds the minimum inter-class separation and the subsets are not smaller 
than the specified minimum size then divide $X$ into $A$ and $B$
and repeat the process for $A$ and $B$. If the value does not exceed the minimum 
inter-class separation of our training data or the subsets exceed the specified minimum 
size, then accept $X$ as approximately 
monotonic and go on to the next class.
\caption{Partitioning algorithm for identifying maximally separated subsets
For each class (multiset) $X$, partition $X$ into two 
subsets $A$ and $B$ such that $NCD_1(AB)-NCD_1(A)-NCD_1(B)$ is a maximum}
\label{tab.1}
\end{figure}

\section{Conclusions and Discussion}\label{sect.concl}
Information distance \cite{BGLVZ98} uses the notion of Kolmogorov complexity 
to identify the metric where the pairwise distance is 
a quantification of the dominant of all differences 
between the digital objects. 
A \textit{normalized} form of information distance, 
the similarity metric, was developed in \cite{Li03} allowing the differences
to be relative rather than absolute. With
a leap of faith the Kolmogorov complexities involved were approximated
with real-world compressors. In \cite{CV04} this led to the NCD, with the
real compressors involved required to satisfy certain properties
and under this requirement the NCD is shown to be a metric as well.
Moreover the evolutionary trees were liberated to hierarchical 
clustering of general objects.
The NCD uses standard file compression algorithms (\textit{e.g.} 
\texttt{gzip}, \texttt{bzip2} or \texttt{ppmz}). In \cite{Li08}, it was 
proposed to extend the idea of information distance to multiple objects, 
or multisets. The idea of information distance for multisets was studied 
in \cite{Vi11}, where certain properties were shown, 
including the positive definiteness, 
symmetry and triangle inequality properties required of a metric. 
In order to compare the relative rather than absolute differences among 
objects, a normalized form of this information distance is required. 
Previous efforts to find such a normalized form that obey 
the triangle inequality were not successful \cite{Vi11}. 

Here we present a normalized form of the information distance for 
multisets and show it to be a metric. This normalized form 
of information distance is based on the observation that in order 
to obey the triangle inequality, the distance can never decrease 
as elements are added to the multiset. The proposed normalized 
information distance for multisets reduces to the original 
(pairwise) formulation of normalized information distance as in 
\cite{Li03} when applied to multisets of cardinality two. 
Similar to the original pairwise NCD developed as an approximation to 
the normalized information distance \cite{CV04}, 
we developed a practically effective approximation to the normalized 
information distance for multisets based on file compression algorithms. 
This NCD for multisets is more computationally demanding than the pairwise 
NCD but it is straightforward to implement the computations in parallel. 

The NCD for multisets is applied to previous applications where 
the pairwise NCD was used so that comparison is
possible. In all these applications it
improved the results either alone or in combination with the pairwise version.
In some applications, including retinal progenitor cell fate prediction 
and the analysis of simulated populations of proliferating cells, 
the new NCD for multisets obtain significant improvement over the pairwise NCD.
In other applications such as the NIST handwritten digits, 
the NCD for multisets alone did not significantly improve upon the result 
from the pairwise NCD, but a significant overall improvement in 
accuracy resulted
by combining both distance measures. In all cases, we applied 
the same parameter-free implementation of both the multiple version
and the pairwise version of the NCD. That is, no features of the 
problems were used at all.

\bibliographystyle{plain}

\end{document}